\documentclass[twoside]{article}

%
\usepackage[accepted]{aistats2020}
\usepackage{mathtools}
\usepackage[toc,page]{appendix}
\usepackage{comment}
\usepackage{times}
\usepackage{epsfig}
\usepackage{graphicx}
\usepackage{amsmath}
\usepackage{caption}
\usepackage{subcaption}
\usepackage{amssymb}
\usepackage{amsthm}
\usepackage{ifthen}
\usepackage[linesnumbered, ruled,vlined]{algorithm2e}
\usepackage{caption,booktabs,array}
\usepackage{multirow}
\usepackage{dsfont}
\usepackage{tikz}
\usepackage{xcolor}

\newcommand{\blue}{\color{blue}}

\newcommand\V{\vert}
\renewcommand{\a}{\alpha}

\newcommand{\Pa}{\mathcal{P}_m(\alpha_n)}
\newcommand{\Pb}{\mathcal{P}_m(\beta_n)}
\newcommand{\an}{\alpha _n}
\newcommand{\bn}{\beta _n}

\usepackage[utf8]{inputenc}

\def\xx{{\boldsymbol x}}
\def\yy{{\boldsymbol y}}

\def\UU{{\boldsymbol U}}

\def\XX{{\boldsymbol X}}

\def\ZZ{{\boldsymbol Z}}
\def\R{{\mathbb{R}}}
\def\PP{{\boldsymbol P}}
\def\balpha{{\boldsymbol \alpha}}

\def\bbeta{{\boldsymbol \beta}}

\def\expect{\mathop{\mathbb{E}}}
\newtheorem{definition}{Definition}
\newtheorem{theorem}{Theorem}
\newtheorem{remark}{Remark}
\newtheorem{lemma}{Lemma}
\newtheorem{corollary}{Corollary}
\newtheorem{proposition}{Proposition}

\usepackage[pagebackref=true,breaklinks=true,letterpaper=true,bookmarks=false]{hyperref}



\newcommand{\rf}[1]{{\color{blue} #1}}

\newcommand{\rg}[1]{{\color{orange} #1}}
\newcommand{\kf}[1]{{\color{green} #1}}
\newcommand{\defas}{\;\mathrel{\!\!{:}{=}\,}}
\newcommand{\fact}[1]{#1\mathpunct{}!}

%


\setlength{\pdfpageheight}{11in}
\setlength{\pdfpagewidth}{8.5in}



\begin{document}
\def\httilde{\mbox{\tt\raisebox{-.5ex}{\symbol{126}}}}
%

%

\twocolumn[

\aistatstitle{Learning with minibatch Wasserstein  : asymptotic and gradient properties} 

\aistatsauthor{ Kilian Fatras\footnotemark[1] \And Younes Zine\footnotemark[2] \And R\'emi Flamary\footnotemark[3] \And R\'emi Gribonval\footnotemark[2]\footnotemark[4] \And Nicolas Courty\footnotemark[1]}

\aistatsaddress{ \\ \footnotemark[1]Univ Bretagne Sud, Inria, CNRS, IRISA, France  \\  \footnotemark[2]Univ Rennes, Inria, CNRS, IRISA, France \\ \footnotemark[3]Univ C\^ote d’Azur, OCA, UMR 7293, CNRS, Laboratoire Lagrange, France \\ \footnotemark[4]Univ Lyon, Inria, CNRS, ENS de Lyon, UCB Lyon 1, LIP UMR 5668, F-69342, Lyon, France } ]

\begin{abstract}
Optimal transport distances are powerful tools to compare probability distributions and have found many applications in machine learning. Yet their algorithmic complexity prevents their direct use on large scale datasets. To overcome this challenge, practitioners compute these distances on minibatches {\em i.e.} they average the outcome of several smaller optimal transport problems. We propose in this paper an analysis of this practice, which effects are not well understood so far. We notably argue that it is equivalent to an implicit regularization of the original problem, with appealing properties such as unbiased estimators, gradients and a concentration bound around the expectation, but also with defects such as loss of distance property. Along with this theoretical analysis, we also conduct empirical experiments on gradient flows, GANs or color transfer that highlight the practical interest of this strategy.
\end{abstract}

\section{Introduction}

Measuring distances between probability distributions is a key problem in machine learning.
Considering the space of probability distributions $\mathcal{M}_{1}^{+}(\mathcal{X})$ over a space $\mathcal{X}$, and given an empirical probability distribution $\alpha \in \mathcal{M}_{1}^{+}(\mathcal{X})$, we want to find a parametrized distribution $\beta_{\lambda}$ which approximates the distribution $\alpha$. Measuring the distance between the distributions requires a function $L : \mathcal{M}_{1}^{+}(\mathcal{X}) \times \mathcal{M}_{1}^{+}(\mathcal{X}) \to \mathbb{R}$. The distribution $\beta$ is parametrized by a vector $\lambda$ and the goal is to find the best $\lambda$ which minimizes the distance $L$ between $\beta_{\lambda}$ and $\alpha$, i.e $L(\alpha, \beta_{\lambda})$. As the distributions are empirical, we need a distance  $L$ with good statistical performance and which have optimization guarantees with modern optimization techniques. Optimal transport (OT) losses as distances have emerged recently as a competitive tool on this problem \cite{genevay_2018, arjovsky_2017}. The corresponding estimator is usually found in the literature under the name of {\em Minimum Kantorovich Estimator}~\cite{Bassetti06,COT_Peyre}. Furthermore, OT losses have been widely used to transport samples from a source domain to a target domain using barycentric mappings \cite{Ferradans2013, DACourty,seguy2018large}. 

Several previous works challenged the heavy computational cost of optimal transport, as the Wasserstein distance comes with a complexity of $\mathcal{O}(n^3log(n))$, where $n$ is the size of the probability distribution supports. Variants of optimal transport have been proposed to reduce its complexity. \cite{CuturiSinkhorn} used an entropic regularization term to get a strongly convex problem which is solvable using the Sinkhorn algorithm with a computational cost of $\mathcal{O}(n^2)$, both in time and space. However, despite some scalable solvers based on stochastic optimization \cite{genevay2016stochastic,seguy2018large}, in the big data setting $n$ is very large and still leads to bottleneck computation problems especially when trying to minimize the OT loss. That is why \cite{genevay_2018, deepjdot} use a minibatch strategy in their implementations to reduce the cost per iteration. They propose to compute the averaged of several optimal transport terms between minibatches from the source and the target distributions. However, using this strategy leads to a different optimization problem that results in a "non optimal" transportation plan between the full original distributions. Recently, \cite{bernton2017} worked on minimizers and \cite{mbot_Sommerfeld} on a bound between the true optimal transport and the minibatch optimal transport. However they did not study the asymptotic convergence, the loss properties and behavior of the minibatch loss.

In this paper we propose to study  minibatch optimal transport by reviewing its relevance as a loss function. After defining the minibatch formalism, we will show which properties are inherited and which ones are lost. 
We describe the asymptotic behavior of the estimator and show that we can derive a concentration bound without dependence on the data space dimension.
Then, we prove that the gradients of the minibatch OT losses are unbiased, which justifies its use with SGD in \cite{genevay_2018}. Finally, we  demonstrate the effectiveness of minibatches in large scale setting and show how to alleviate the memory issues for barycentric mapping. The paper is structured as follows: in Section 2, we propose a brief review of the different optimal transport losses. In Section 3, we give formal definitions of the minibatch strategy and illustrate their impacts on OT plans. Basic properties, asymptotic behaviors of the estimator and differentiability are then described. Finally in Section 4, we highlight the behavior of the minibatch OT losses on a number of experiments: gradient flows, generative networks and color transfer.

\section{Wasserstein distance and regularization}

\paragraph{Wasserstein distance}
The Optimal Transport metric measures a distance between two probability distributions $(\balpha, \bbeta) \in \mathcal{M}_{+}^1(\mathcal{X}) \times \mathcal{M}_{+}^1(\mathcal{X})$ by considering a ground metric $c$ on the space $\mathcal{X}$ \cite{COT_Peyre}. 
Formally, the Wasserstein distance between two distributions can be expressed as
\begin{equation}
    W_{c}(\balpha, \bbeta) = \underset{\pi \in \UU(\balpha, \bbeta)}{\text{min}} \int_{\mathcal{X}\times \mathcal{Y}}c(\xx,\yy) d\pi(\xx,\yy),
\label{eq:wasserstein_dist}
\end{equation}
where $\UU(\balpha, \bbeta)$ is the set of joint probability distribution with
marginals $\balpha$ and $\bbeta$ such that 
$
\boldsymbol U(\balpha, \bbeta) = \left \{ \pi \in \mathcal{M}_{+}^1(\mathcal{X}, \mathcal{Y}): \PP_{\mathcal{X}}\#\pi = \balpha, \PP_{\mathcal{Y}}\#\pi = \bbeta \right\}\nonumber
$. {$\PP_{\mathcal{X}}\#\pi$ (resp. $\PP_{\mathcal{Y}}\#\pi$) is the marginalization of $\pi$ over $\mathcal{X}$ (resp. $\mathcal{Y}$)}. The ground cost $c(\xx,\yy)$ is usually chosen as as the Euclidean or squared Euclidean distance on $\R^d$, in this case $W_{c}$ is a metric as well. Note that the optimization problem above is called the Kantorovitch formulation of OT and the optimal $\pi$ is called an optimal transport plan. When the distributions are discrete, the problem becomes a discrete linear program that can be solved with a cubic complexity in the size of the distributions support. Also the convergence in population of the Wasserstein distance is known to be slow with a rate  $O(n^{-1/d})$ depending on the dimensionality $d$ of the space $\mathcal{X}$ and the size of the population $n$ \cite{weed2019}. \cite{JMLRGerber} used a multi-scale strategy in order to compute a fast approximation of the Wasserstein distance.  

\paragraph{Entropic regularization}
Regularized entropic OT was proposed in \cite{CuturiSinkhorn} and leads to a more efficient $\mathcal{O}(n^2)$ solver. We define the entropic loss as:\newline $W_{c}^{\varepsilon}(\balpha, \bbeta) = \underset{\pi \in \UU(\balpha, \bbeta)}{min} \int\displaylimits_{\mathcal{X}\times\mathcal{Y}}c(\xx, \yy) d\pi(\xx, \yy) + \varepsilon H(\pi|\xi)$, with
$ H(\pi|\xi) = \int_{\mathcal{X}\times\mathcal{Y}} \log(\frac{d\pi(\xx, \yy)}{d\balpha(\xx) d\bbeta(\yy)}(\xx, \yy))d\pi(\xx, \yy)$
where $\xi = \balpha \otimes \bbeta$ and $\varepsilon$ is the regularization coefficient. We call this function, the entropic OT loss. As we will see later, this entropic regularization also makes the problem strongly convex and differentiable with respect to the cost or the input distributions.

It is well known that adding an entropic regularization leads to sub-optimal solutions $\pi$ on the original problem, and it is not a metric since $W_{c}^{\varepsilon}(\bbeta, \bbeta) \ne 0$. This motivated \cite{genevay_2018} to introduce an unbiased loss which uses the entropic regularization and called it the Sinkhorn divergence. It is defined as: \newline$S_{c}^{\varepsilon}(\balpha, \bbeta) = W_{c}^{\varepsilon}(\balpha, \bbeta) - \frac{1}{2}(W_{c}^{\varepsilon}(\balpha, \balpha) + W_{c}^{\varepsilon}(\bbeta, \bbeta))$


It can still be computed with the same order of complexity as the entropic loss and has been proven to interpolate between OT and maximum mean discrepancy (MMD) \cite{feydy19a} with respect to the regularization coefficient. MMD are integral probability metrics over a reproducing kernel Hilbert space \cite{MMD_Gretton}. When $\varepsilon$ tends to 0, we get the OT solution back and when $\varepsilon$ tends to $\infty$, we get a solution closer to the MMD solution. Second, as proved by \cite{feydy19a}, if the cost $c$ is Lipschitz, then $S_{c}^{\varepsilon}$ is a convex, symmetric, positive definite loss function. Hence the use of the Sinkhorn divergence instead of the regularized OT. The sample complexity of the Sinkhorn divergence, that is the convergence rate of a metric between a probability distribution and its empirical counterpart as a function of the number of samples, was proven in  \cite{genevay19} to be:
$
O\left(\frac{e^{\frac{\kappa}{\varepsilon}}}{\sqrt{n}}\left(1+\frac{1}{\varepsilon^{\lfloor d / 2\rfloor}}\right)\right)
$ where $d$ is the dimension of $\mathcal{X}$. We see an interpolation between MMD and OT sample complexity depending on $\varepsilon$.
\paragraph{Minibatch Wasserstein}
While the entropic loss has better computational complexity than the original Wasserstein distance, it is still challenging to compute it for a large dataset. To overcome this issue, several papers rely on a minibatch computation \cite{genevay_2018,deepjdot, liutkus19a, kolouri2016sliced}. Instead of computing the OT problem between the full distributions, they compute an averaged of OT problems between batches of the source and the target domains. It differs from \cite{JMLRGerber} as the size of the minibatch remains constant. Several work came out to justify the minibatch paradigm. \cite{bernton2017} showed that for generative models, the minimizers of the minibatch loss converge to the true minimizer when the minibatch size increases. \cite{mbot_Sommerfeld} considered another approach, where they approximate OT with the minibatch strategy and exhibit a deviation bound between the two quantities. We follow a different approach from the two previous work. We are interested in the behavior of using the minibatch strategy as a loss function. We study the asymptotic behavior of using minibatch, the optimization procedure, the resulting transportation plan and the behavior of such a loss for data fitting problems.

\section{Minibatch Wasserstein }

In this section we first define the Minibatch Wasserstein and illustrate it on simple examples. Next we study its asymptotic properties and optimization behavior.

\subsection{Notations and Definitions}
\paragraph{Notations}
Let $\boldsymbol{X}=(X_1, \cdots, X_n) $ (resp. $\boldsymbol{Y}=(Y_1, \cdots, Y_n) $) be samples of $n$ \emph{iid} random variables drawn from a distribution $ \alpha $ (resp. $ \beta $) on the source (resp. target) domain. We denote by $\alpha_n$ and $\beta_n$  the empirical distributions of support $ \{X_1, \cdots, X_n \} $ and  $ \{ Y_1, \cdots, Y_n \} $ respectively. The weights of $X_i$ (resp. $Y_i$) are uniform, i.e equal to $1/n$. We further suppose that $\alpha$ and $\beta$ have compact support, the ground cost is then bounded by a constant M. $\alpha^{\otimes m }$ denotes a sample of $m$ random variables following $\alpha$. In the rest of the paper, we will not make a difference between a batch $A$ of cardinality $m$ and its associated (uniform probability) distribution $ \hat{A}:= \frac{1}{m} \sum_{a \in A} \delta_a $.  The number of possible mini-batches of size $m$ on $n$ distinct samples is the binomial coefficient $\dbinom{n}{m}=\frac{\fact{n}}{\fact{m} \fact{(n - m)}}$. For $ 1 \leqslant m \leqslant n  $, we write $ \mathcal{P}_m(\alpha_n) $ (resp. $ \mathcal{P}_m(\beta_n) $) the collection of subsets of cardinality $m$ of $ \alpha_n $ (resp. of $ \beta_n $). We will denote the integer part of the ratio $n/m$ as $\lfloor n/m \rfloor$.


\paragraph{Definitions}

We will first give formal definitions of the different quantities that we will use in this paper. We start with minibatch Wasserstein losses for continuous, semi- discrete and discrete distributions.

\begin{definition}[Minibatch Wasserstein definitions] Given an OT loss $h$ and an integer $m \leq n$, we define the following quantities:

The continuous loss:
  \begin{equation}
    U_h(\alpha, \beta) :=  \mathbb{E}_{(X,Y) \sim  \alpha^{\otimes m } \otimes \beta ^{\otimes m}} [h(X,Y)]
    \label{def:expectationMinibatch}
  \end{equation}
The semi-discrete loss:
\begin{equation}
  U_h(\an,\beta)  := \dbinom{n}{m}^{-1} \sum_{A \in \Pa} \mathbb{E}_{Y \sim \beta ^{\otimes m}} [h(A,Y)]
\label{def:semi_discrete}
\end{equation}
The discrete-discrete loss:
\begin{equation}
U_h(\an,\bn) :=  \dbinom{n}{m}^{-2} \sum_{A \in \Pa} \sum_{B \in \Pb} h(A,B)
\label{def:discrete}
\end{equation}
where $h$ can be the Wasserstein distance $W$, the entropic loss $W_{\varepsilon}$ or the sinkhorn divergence $S_{\varepsilon}$ for a cost $c(\xx,\yy)$.
\end{definition}

Note that $h$ is a U-statistic kernel. Note also that the minibatches elements are drawn without replacement. These quantities represent an average of Wasserstein distance over minibatches of size $m$. Note that samples in $A$ have uniform weights $1/m$ and that the ground cost can be computed between all pair of batches $A$ and $B$. It is easy to see that (\ref{def:discrete}) is an empirical estimator of (\ref{def:expectationMinibatch}). In real world applications, computing the average over all batches is too costly as we have a combinatorial number of batches, that is why we will rely on a subsampled quantity.

\begin{definition}[Minibatch subsampling]\label{def:sub_discrete} Pick an integer $ k > 0$. We define:
  \begin{equation}
    \widetilde{U}_h^k(\an, \bn) :=k^{-1} \sum_{  (A, B)  \in D_k  } h(A, B)
  \end{equation}
where $D_k$ is a set of cardinality $k$ whose elements are drawn at random from the uniform distribution on $ \Gamma:= \mathcal{P}_m( \{X_1, \cdots, X_n  \}) \times \mathcal{P}_m( \{Y_1, \cdots, Y_n \} )  $.
\end{definition}

As the transportation plan might be of interest, let us now review the minibatch definition for the OT plan which can be built for all OT variants which have an OT plan. Formal definitions are provided in appendix.
\begin{definition}[Mini-batch transport plan]
Consider $\an$ and $\bn$ two discrete probability distributions. For each $A=\{a_1, \dots, a_m\} \in \mathcal{P}_m(\an)$ and $B=\{b_1, \dots, b_m\} \in \mathcal{P}_m(\bn)$ we denote by $\Pi_{ A, B }$ the optimal plan between the random variables, considered as a $n \times n$ matrix where all entries are zero except those indexed in $A \times B$. We define the \textit{averaged mini-batch transport matrix}:
\begin{equation}
 \Pi_m(\an, \bn)  \defas \dbinom{n}{m}^{-2} \sum_{A \in \Pa} \sum_{B \in \Pb} \Pi_{  A, B }. \label{eq:pim}
\end{equation}
Following the subsampling idea, we define the subsampled minibatch transportation matrix for $A$ and $B$:
\begin{equation}
 \Pi_k(\an, \bn) :=  k^{-1} \sum_{  (A, B)  \in D_k  }  \Pi_{  A, B }
\end{equation}
where $D_k$ is drawn as in Definition~\ref{def:sub_discrete}.
\end{definition}

It is well known that the Wasserstein distance suffers from biased gradients \cite{Bellemare_cramerGAN}. We study if $U_h(\an,\bn)$ has a bias \textit{wrt} $U_h(\alpha,\beta)$, and then the bias in $U_h(\an,\bn)$ gradients for first order optimization methods.

\subsection{Illustration on simple examples}
\begin{figure*}[h]
    \centering
    \includegraphics[scale=0.3]{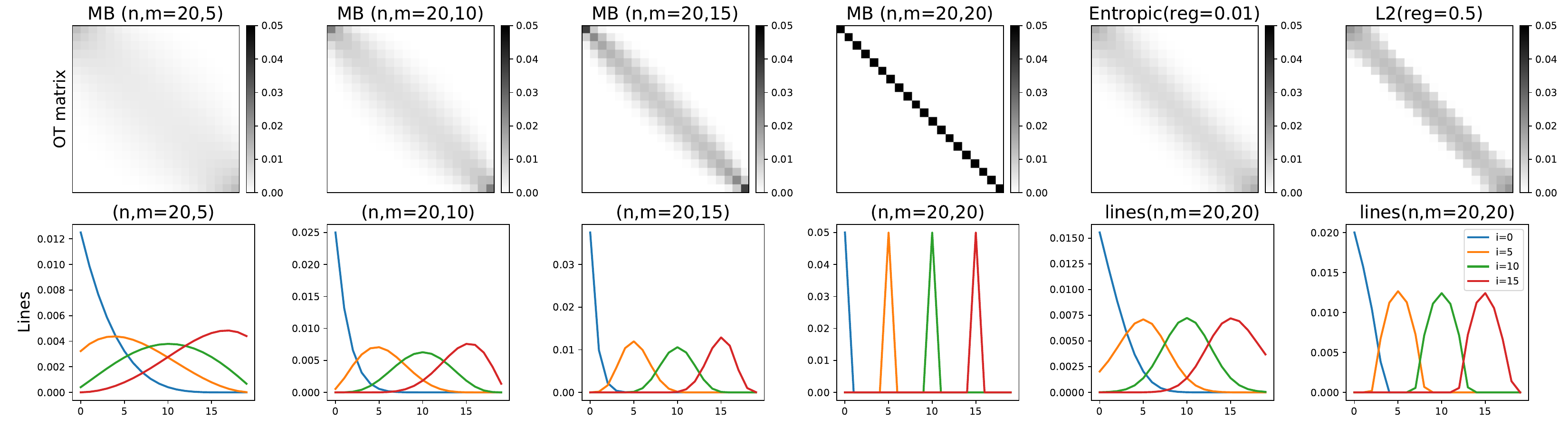}
    \caption{Several OT matrices between distributions with $n=20$ samples in 1D. The first row shows the minibatch OT matrices $\Pi_m$ for different values of $m$, the second row provides the shape of the distributions on the rows of $\Pi_m$. The two last columns correspond to classical entropic and quadratic regularized OT.}
    \label{fig:1D_unif}
\end{figure*}

To illustrate the effect of the minibatch, we compute $\Pi_m$ \eqref{eq:pim} on two simple examples.

\paragraph{Distributions in 1D} The 1D case is an interesting problem because we have access to a closed-form of the optimal transport solution which allows us to calculate the closed-form of a minibatch paradigm. It is the foundation of the sliced Wasserstein distance \cite{Bonnotte2013} which is widely used as an alternative to the Wasserstein distance \cite{liutkus19a,kolouri2016sliced}.

We suppose that we have uniform empirical distributions $\alpha_n$ and $\beta_n$.
We assume (without loss of generality) that the points are ordered in their own distribution. In such a case, we can compute the 1D Wasserstein 1 distance with cost $c(x,y)=|x-y|$ as:
   $ W(\an, \bn) = \frac{1}{n} \sum_{i=1}^n \vert x_i - y_j \vert$ and the OT matrix is simply an identity matrix scaled by $\frac{1}{n}$ (see \cite{COT_Peyre} for more details).
 After a short combinatorial calculus (given in appendix A.5), the 1D minibatch transportation matrix coefficient $\pi_{j,k}$ can be computed as $\pi_{j,k}=$:
\begin{align*}
\small{ \frac{1}{m} \dbinom{n}{m}^{-2} \sum_{i=i_{\text{min}}}^{i_{\text{max}}} \dbinom{j-1}{i-1} \dbinom{k-1}{i-1} \dbinom{n-j}{m-i} \dbinom{n-k}{m-i}}
\end{align*}
where $i_{\text{min}} = \text{max}(0, m-n+j, m-n+k)$ and $i_{\text{max}} = \text{min}(j, k)$. $i_{\text{min}}$ and $i_{\text{max}}$ represent the sorting constraints.

We show on the first row Figure \ref{fig:1D_unif} the minibatch OT matrices $\Pi_m$ with $n=20$ samples for different value of the minibatch size $m$. We also provide on the second row of the figure a plot of the distributions in several rows of $\Pi_m$. We give the matrices for entropic and quadratic regularized OT for comparison purpose. It is clear from the figure that the OT matrix densifies when $m$ decreases, which has a similar effect as entropic regularization. Note the more localized spread of mass of quadratic regularization that preserve sparsity as discussed in \cite{blondel2018}.
While the entropic regularization spreads the mass in a similar manner for all samples, minibatch OT spreads less the mass on samples at the extremities. Note that the minibatch OT matrices solution is for ordered samples and do not depend on the position of the samples once ordered, as opposed to the regularized OT methods. This will be better illustrated in the next example.

\begin{figure*}[h]
    \centering
    \includegraphics[scale=0.35]{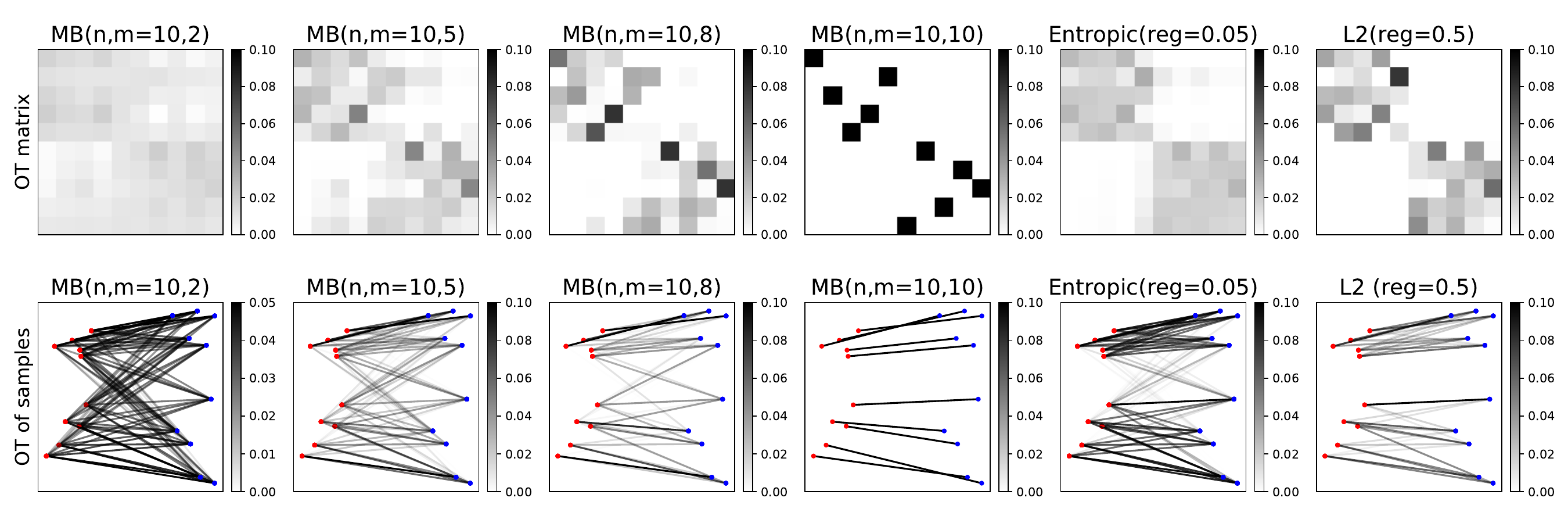}
    \caption{ Several OT matrices between 2D distributions with $n=10$ samples. The first row shows the minibatch OT matrices $\Pi_m$ for different values of $m$, the second row provide the shape of the distributions on the rows of the OT matrices. The second row provide a 2D visualization of where the mass is transported between the 2D positions of the sample.\label{fig:2D_gauss}}

\end{figure*}

\paragraph{Minibatch Wasserstein in 2D} We illustrate the OT matrix between two empirical distributions of 10 samples each in 2D in Figure \ref{fig:2D_gauss}. We use two 2D empirical distributions (point cloud) where the samples have a cluster structure and the samples are sorted \emph{w.r.t.} their cluster.  We can see from the OT matrices in the first row of the figure that the cluster structure is more or less recovered with the regularization effect of the minibatches (and also regularized OT). On the second row one can see the effect of the geometry of the samples on the spread of mass. Similarly to 1D, for Minibatch OT, samples on the border of the simplex cannot spread as much mass as those in the center and have darker rows. This effect is less visible on regularized OT.


\subsection{Basic properties}
We now state some basic properties for minibatch Wasserstein losses. All properties are proved in the appendix. The first property is about the transportation plan $\Pi_m$ between the two initial distributions, defined in \eqref{eq:pim}.
\begin{proposition}
 The transportation plan $\Pi_m(\an, \bn)$ is an admissible transportation plan between the full input distributions $\an, \bn$, and we have : $U_h(\an,\bn) \geq W(\an,\bn)$.
\end{proposition}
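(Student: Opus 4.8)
First, that the averaged minibatch plan $\Pi_m(\an,\bn)$ lies in the transport polytope $\UU(\an,\bn)$, i.e. it is a nonnegative matrix with the correct uniform marginals $1/n$. Second, that averaging optimal minibatch plans yields a transport cost no smaller than the global optimum $W(\an,\bn)$, which is immediate once admissibility is established, since $W(\an,\bn)$ is by definition the minimum of the transport cost over all admissible plans.

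For the admissibility claim, the key observation is that $\Pi_m(\an,\bn)$ is a convex combination (with equal weights $\binom{n}{m}^{-2}$) of the individual plans $\Pi_{A,B}$, so it suffices to understand the marginals of a single $\Pi_{A,B}$ and then average. Each $\Pi_{A,B}$ is an optimal plan between the uniform distributions $\hat A$ and $\hat B$, padded with zeros outside $A\times B$; thus its nonzero block is a doubly stochastic-type matrix whose row sums over $A$ equal $1/m$ and whose column sums over $B$ equal $1/m$. I would first compute the $i$-th row marginal of $\Pi_m$: summing over all $(A,B)$ pairs, the point $X_i$ receives mass only from those batches $A$ that contain it, and by symmetry each such $X_i$ lies in exactly $\binom{n-1}{m-1}$ of the $\binom{n}{m}$ source batches. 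The main computation is the counting identity
\begin{equation*}
\sum_{A\ni X_i}\sum_{B}\bigl(\text{row-}i\text{ mass of }\Pi_{A,B}\bigr)
=\binom{n-1}{m-1}\binom{n}{m}\cdot\frac{1}{m},
\end{equation*}
where the factor $1/m$ is the total row mass of any optimal $m\times m$ plan at a fixed source point, and the factor $\binom{n}{m}$ counts the free choice of target batch $B$. Dividing by the normalization $\binom{n}{m}^{2}$ and using $\binom{n-1}{m-1}/\binom{n}{m}=m/n$ collapses this to exactly $1/n$, as required; the symmetric argument handles the column marginals.

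Once $\Pi_m(\an,\bn)\in\UU(\an,\bn)$ is verified, the inequality follows directly. By definition of the discrete minibatch loss in \eqref{def:discrete} with $h=W$, the quantity $U_W(\an,\bn)$ equals $\binom{n}{m}^{-2}\sum_{A,B} W(A,B)$, and each $W(A,B)=\langle C,\Pi_{A,B}\rangle$ is the cost of the optimal plan $\Pi_{A,B}$ with respect to the (restricted) ground cost. By linearity of the Frobenius inner product, $U_W(\an,\bn)=\langle C,\Pi_m(\an,\bn)\rangle$, the transport cost of an admissible plan. Since $W(\an,\bn)$ is the infimum of $\langle C,\pi\rangle$ over all $\pi\in\UU(\an,\bn)$, and $\Pi_m(\an,\bn)$ is one such feasible point, we obtain $U_W(\an,\bn)\geq W(\an,\bn)$.

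**The main obstacle** I anticipate is the bookkeeping in the marginal computation: one must be careful that the row-$i$ mass contributed by a single $\Pi_{A,B}$ depends on whether $X_i\in A$ (it is $1/m$ if so, and $0$ otherwise) but is \emph{independent} of the particular optimal plan chosen, which is what makes the symmetry argument valid regardless of ties or non-uniqueness of optimal plans. Establishing this cleanly — that the per-source-point row mass is always $1/m$ irrespective of which optimal coupling $\Pi_{A,B}$ is selected — is the crux, and it rests only on the marginal constraints of $\UU(\hat A,\hat B)$ rather than on optimality. The counting identity $\sum_{A\ni X_i} 1 = \binom{n-1}{m-1}$ is then routine, and the remaining algebra is a single application of the ratio $\binom{n-1}{m-1}/\binom{n}{m}=m/n$.
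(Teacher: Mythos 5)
Your proof is correct and follows essentially the same route as the paper: the identical counting argument (each $X_i$ lies in $\binom{n-1}{m-1}$ of the $\binom{n}{m}$ source batches, each $\Pi_{A,B}$ with $A \ni X_i$ contributes row mass exactly $1/m$ by the marginal constraint alone, and $\binom{n-1}{m-1}/\binom{n}{m}=m/n$ collapses the algebra to $1/n$) establishes admissibility, and the inequality then follows, as in the paper, by writing $U_W(\an,\bn)=\langle C,\Pi_m(\an,\bn)\rangle$ and using that $W(\an,\bn)$ minimizes $\langle C,\cdot\rangle$ over the transport polytope. One caveat worth noting (equally present in the paper's own treatment): this inequality step is specific to $h=W$, since for $h=W_\varepsilon$ or $S_\varepsilon$ the identity $U_h(\an,\bn)=\langle C,\Pi_m\rangle$ fails due to the entropy term, so neither you nor the paper actually proves the displayed inequality for those choices of $h$.
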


The fact that $\Pi_m$ is an admissible transportation plan means that even though it is not optimal, we still do transportation similarly to regularized OT. Note that $\Pi_k$ is not a transportation plan, in general, for a finite $k$ but we study its asymptotic convergence to marginals in the next section. Regarding our empirical estimator, when we have \emph{iid} data, it enjoys the following property:


\begin{proposition}[Unbiased estimator]
$U_h(\an, \bn)$ is an unbiased estimator of $U_h(\alpha, \beta)$ for the continuous setting and of $U_h(\an, \beta)$ for the semi-discrete setting.
\label{prop:bias}
\end{proposition}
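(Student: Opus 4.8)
The plan is to recognize $U_h(\an, \bn)$ as a two-sample U-statistic and reduce its unbiasedness to linearity of expectation plus one structural observation about the sampling. The key fact I would establish first is this: since $X_1,\dots,X_n$ are \emph{iid} from $\alpha$, any batch $A=\{X_{i_1},\dots,X_{i_m}\}$ indexed by $m$ distinct indices is distributed exactly as $\alpha^{\otimes m}$ (drawing without replacement \emph{within} a batch only forces the indices to be distinct, which preserves independence); symmetrically $B$ is distributed as $\beta^{\otimes m}$, and the two families are independent. Because $h$ acts on a batch through its uniform empirical measure, $h(A,B)$ is a symmetric function of the points inside each batch, so its expectation does not depend on which particular index set is chosen.

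For the continuous claim I would take the expectation over the whole sample $(\boldsymbol{X},\boldsymbol{Y})$ and push it through the finite double sum by linearity:
\begin{equation*}
\expect[U_h(\an, \bn)] = \binom{n}{m}^{-2} \sum_{A \in \Pa} \sum_{B \in \Pb} \expect[h(A, B)].
\end{equation*}
By the structural observation each inner expectation equals $\expect_{(X,Y)\sim \alpha^{\otimes m}\otimes\beta^{\otimes m}}[h(X,Y)] = U_h(\alpha,\beta)$. Since there are exactly $\binom{n}{m}^2$ terms, the prefactor cancels and $\expect[U_h(\an,\bn)] = U_h(\alpha,\beta)$.

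For the semi-discrete claim I would instead condition on $\boldsymbol{X}$ (so that $\an$ is fixed) and average only over $\boldsymbol{Y}$. With $A$ fixed, applying the same observation to $B$ gives $\expect_{\boldsymbol{Y}}[h(A,B)] = \expect_{Y\sim\beta^{\otimes m}}[h(A,Y)]$ for every $B$; summing over the $\binom{n}{m}$ choices of $B$ collapses one binomial factor and yields
\begin{equation*}
\expect_{\boldsymbol{Y}}[U_h(\an, \bn)] = \binom{n}{m}^{-1} \sum_{A \in \Pa} \expect_{Y \sim \beta^{\otimes m}}[h(A, Y)] = U_h(\an, \beta),
\end{equation*}
which is precisely the semi-discrete loss of \eqref{def:semi_discrete}.

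The argument is essentially bookkeeping, so I do not anticipate a genuine obstacle. The only point deserving care is the structural observation itself: I must verify that within-batch sampling without replacement still produces genuinely \emph{iid} coordinates (it does, since distinct indices of an \emph{iid} sequence remain independent), and that $h$ depends on a batch only through its uniform empirical measure, so that the value and expectation of $h(A,B)$ are invariant under relabelling of the sampled indices. Everything else follows from linearity of expectation and the independence of $\boldsymbol{X}$ and $\boldsymbol{Y}$.
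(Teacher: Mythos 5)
Your proof is correct and takes essentially the same route as the paper: the paper treats $U_h(\an,\bn)$ as a two-sample U-statistic with kernel $h$, whose expectation under \emph{iid} sampling is $U_h(\alpha,\beta)$ (invoking Hoeffding's framework), and your linearity-of-expectation calculation over batches of distinct indices is precisely the standard argument underlying that claim. Your handling of the semi-discrete case by conditioning on $\boldsymbol{X}$ and collapsing the sum over $B$ is likewise the intended reading of the statement.
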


As we use minibatch OT for loss function, it is of interest to see if it is still a distance on the distribution space such as the Wasserstein distance or the Sinkhorn divergence.

\begin{proposition}[Positivity and symmetry]
  The minibatch Wasserstein losses are positive and symmetric losses. However, they are not metrics 
  since $U_h(\alpha, \alpha) > 0$.
\end{proposition}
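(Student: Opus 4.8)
The plan is to treat the three assertions --- positivity, symmetry, and failure of the identity of indiscernibles --- separately, and to reduce each one to a corresponding property of the base OT loss $h$ together with the fact that $U_h$ is, in all three cases of the definition, an expectation (or finite convex combination) of evaluations $h(A,B)$.

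For \textbf{positivity}, I would first record that each admissible choice of $h$ is non-negative: $W\ge 0$ since $c\ge 0$ and $W$ is a genuine metric; $W_\varepsilon\ge 0$ because its objective is the sum of a non-negative transport term $\int c\,d\pi\ge 0$ and the relative entropy $H(\pi|\xi)=\mathrm{KL}(\pi|\balpha\otimes\bbeta)\ge 0$; and $S_\varepsilon\ge 0$ by the positive-definiteness established in \cite{feydy19a}. Since $U_h(\alpha,\beta)$, $U_h(\an,\beta)$ and $U_h(\an,\bn)$ are respectively an expectation, a partial average, and a finite convex average of values $h(A,B)\ge 0$, non-negativity is preserved termwise, giving $U_h\ge 0$ in every setting.

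For \textbf{symmetry}, the two ingredients are that $h$ itself is symmetric --- $W$ and $W_\varepsilon$ are symmetric whenever the ground cost satisfies $c(\xx,\yy)=c(\yy,\xx)$, and $S_\varepsilon$ is symmetric by construction --- and that the definitions of $U_h$ are invariant under exchanging the two arguments. Concretely, in the continuous case I would swap the roles of the two independent samples: if $(X,Y)\sim\alpha^{\otimes m}\otimes\beta^{\otimes m}$ then $(Y,X)\sim\beta^{\otimes m}\otimes\alpha^{\otimes m}$, so Fubini combined with $h(X,Y)=h(Y,X)$ yields $U_h(\alpha,\beta)=U_h(\beta,\alpha)$; the discrete case is the identical computation with the double sum over $\Pa\times\Pb$ reindexed.

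The interesting part is showing the loss is \textbf{not a metric}, i.e.\ exhibiting $\alpha$ with $U_h(\alpha,\alpha)>0$. The point is that $U_h(\alpha,\alpha)=\expect[h(A,B)]$ compares two \emph{independent} minibatches $A,B\sim\alpha^{\otimes m}$, which generically differ even though they share the same law. I would therefore take any $\alpha$ whose support contains at least two points, so that the two batches are unequal as empirical measures with positive probability, and on the event $\{A\neq B\}$ invoke the definiteness of $h$ --- $h(A,B)=0\iff A=B$ for the metric $W$ and for $S_\varepsilon$ (again by \cite{feydy19a}) --- to conclude $h(A,B)>0$. Since $h(A,B)\ge 0$ everywhere and is strictly positive on a set of positive probability, its expectation is strictly positive; for $h=W_\varepsilon$ the claim is even easier because $W_\varepsilon(A,A)>0$ already on the diagonal. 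The main obstacle is bookkeeping the three losses uniformly and being careful about degeneracy: if $\alpha$ is a single atom then $A=B$ almost surely and $U_W(\alpha,\alpha)=U_{S_\varepsilon}(\alpha,\alpha)=0$, so the statement is best read as ``$U_h$ fails the identity of indiscernibles'' --- it suffices to produce one non-degenerate $\alpha$ witnessing $U_h(\alpha,\alpha)>0$, which the two-point-support construction does.
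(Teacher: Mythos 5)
Your proof is correct and follows essentially the same route as the paper: positivity and symmetry are inherited termwise from $h$, and strict positivity of $U_h(\alpha,\alpha)$ comes from the fact that two independently drawn (hence generically distinct) minibatches from the same distribution are separated by $h$. The paper runs this last step in the discrete setting --- writing $U_h(\an,\an)$ as the double sum over pairs $A \neq A'$ of minibatches, nonempty precisely when $m<n$ --- while you give its continuous analogue via the positive-probability event $\{A \neq B\}$, and your explicit caveat about the degenerate single-atom $\alpha$ plays the same role as the paper's $m<n$ hypothesis.
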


The minibatch Wasserstein losses inherits some properties from the Wasserstein distance 
but the minibatch procedure leads to a strictly positive loss even when starting from unbiased losses such as Sinkhorn divergence or Wasserstein distance.
Remarkably, the Sinkhorn divergence was introduced in the literature to correct the bias from the entropic regularization, and interestingly it was performed in practice on GANs experiments with a minibatch strategy which reintroduced a bias. Whether removing the bias by following the same idea than the Sinkhorn divergence 
leads to a positive loss 
is an open question left to future work. Furthermore, given the definition of the minibatch losses it is natural to conjecture that they are convex. Informal ingredients towards a proof of this fact are given in the supplementary material.

An important parameter is the value of the minibatch size $m$. We remark that the minibatch procedure allows us to interpolate between OT, when $m=n$ and averaged pairwise distance, when $m=1$. The value of $m$ will also be important for the convergence of our estimator as we will see in the next section.


\subsection{Asymptotic convergence}

We are now interested in the asymptotic behavior of our estimator $\widetilde{U}_h^k(\an, \bn)$ and its deviation to $U_h(\alpha,\beta)$. We will give a deviation bound between our subsampled estimator and the expectation (taken on both drawn minibatches and drawn empirical data) of our estimator. This result is given in the continuous setting but a similar result holds for the semi-discrete setting and it follows the same proof. We will give a bound with respect to both $k$ and $n$.

\begin{theorem}[Maximal deviation bound]\label{thm:inc_U_to_mean} Let $\delta \in (0,1) $, $k \geqslant 1$ and $m$ be fixed, and consider two distributions $\alpha,\beta$ with bounded support and an OT loss $h \in \{W, W_\epsilon, S_\epsilon$\}. We have a deviation bound between $\widetilde{U}_h^k(\an, \bn)$ and $U_h(\alpha, \beta)$ depending on the number of empirical data $n$ and the number of batches $k$, with probability at least $1-\delta$ on the draw of $\an,\bn$ and $D_k$ we have:
  \begin{align*}
    \vert \widetilde{U}_h^k(\an, \bn) - U_h(\alpha, \beta) \vert \leq M_h (\sqrt{\frac{ \log(\frac{2}{\delta})}{2\lfloor \frac{n}{m} \rfloor}} + \sqrt{\frac{2\log(\frac{2}{\delta})}{k}  })
  \end{align*}
where
{$M_h$ depends on $h$ and scales at most as $\mathcal{O}(\log(m))$.}
\end{theorem}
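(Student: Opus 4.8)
The plan is to isolate the two sources of randomness — the subsampling of batch pairs and the sampling of the empirical data — and control each by a concentration inequality before combining them. Conditioning on the data, $\widetilde{U}_h^k$ is an average of $k$ i.i.d.\ evaluations $h(A,B)$ with $(A,B)$ uniform on $\Gamma$, whose conditional mean is exactly $U_h(\an,\bn)$ by Definition~\ref{def:sub_discrete}; and $U_h(\an,\bn)$ is itself an unbiased estimator of $U_h(\alpha,\beta)$ by Proposition~\ref{prop:bias}. The triangle inequality therefore gives
\begin{equation*}
\vert \widetilde{U}_h^k(\an,\bn) - U_h(\alpha,\beta)\vert \le \underbrace{\vert \widetilde{U}_h^k(\an,\bn) - U_h(\an,\bn)\vert}_{\text{subsampling term}} + \underbrace{\vert U_h(\an,\bn) - U_h(\alpha,\beta)\vert}_{\text{data term}},
\end{equation*}
and I would bound each summand on an event of probability at least $1-\delta/2$, combining by a union bound.

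A preliminary step is the range estimate $M_h=\mathcal{O}(\log m)$, which is the only place where the particular loss enters. Compactness of the supports bounds the ground cost by $M$, so $0\le W(A,B)\le M$. For the entropic loss, evaluating at the optimal plan $\pi^\star$ gives $W_\varepsilon(A,B)=\langle c,\pi^\star\rangle+\varepsilon H(\pi^\star\vert\xi)$; since $\xi$ is the product of two uniform measures on $m$ atoms, a short computation shows $H(\pi^\star\vert\xi)=2\log m-\mathrm{Ent}(\pi^\star)\in[0,\log m]$, whence $0\le W_\varepsilon\le M+\varepsilon\log m$, and the bound for $S_\varepsilon$ follows from its definition. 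In every case $h(A,B)$ ranges over an interval of length $M_h=\mathcal{O}(\log m)$.

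For the subsampling term, conditionally on $\an,\bn$ the summands $h(A,B)$ are i.i.d.\ and bounded, so Hoeffding's inequality yields $\vert \widetilde{U}_h^k-U_h(\an,\bn)\vert\le M_h\sqrt{2\log(2/\delta)/k}$ with conditional probability at least $1-\delta/2$; as this holds for every realization of the data it holds unconditionally. The data term is the crux: $U_h(\an,\bn)$ is a \emph{two-sample} U-statistic with bounded kernel $h$ and mean $U_h(\alpha,\beta)$, so I would invoke Hoeffding's decomposition. Writing $p=\lfloor n/m\rfloor$, for any pair of permutations of the two samples one forms $p$ blocks $(A_j,B_j)$ that are simultaneously disjoint in both samples; then $p^{-1}\sum_{j=1}^p h(A_j,B_j)$ is a mean of $p$ i.i.d.\ bounded terms with mean $U_h(\alpha,\beta)$, and averaging over all such block-decompositions recovers $U_h(\an,\bn)$. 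Applying Hoeffding to each decomposition and transferring the moment-generating-function bound to the average by convexity (Jensen) gives $\vert U_h(\an,\bn)-U_h(\alpha,\beta)\vert\le M_h\sqrt{\log(2/\delta)/(2p)}$ with probability at least $1-\delta/2$. Summing the two bounds on the intersection event of probability at least $1-\delta$ yields the claim.

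The main obstacle is the two-sample U-statistic step: one must exhibit a decomposition into blocks that are \emph{simultaneously} disjoint in both the $\an$ and $\bn$ samples, so that each block-average is a genuine mean of $\lfloor n/m\rfloor$ independent terms, and then pass from the per-decomposition exponential bound to the full U-statistic by convexity. This is precisely what produces the $\lfloor n/m\rfloor$ (rather than $n$) in the denominator and the dimension-free character of the bound; everything else reduces to Hoeffding's inequality and the entropy estimate giving $M_h=\mathcal{O}(\log m)$.
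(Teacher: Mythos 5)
Your proof is correct and follows essentially the same route as the paper: a triangle inequality splitting the error into a two-sample U-statistic term (controlled by Hoeffding's inequality for U-statistics, which is what produces the $\lfloor n/m\rfloor$) and a subsampling term (controlled by Hoeffding's inequality conditionally on the data), together with the entropy estimate giving $M_h = \mathcal{O}(\log m)$. The only difference is that you spell out the block-decomposition and Jensen/MGF argument behind the U-statistic bound, which the paper simply cites from Hoeffding (1963, Section 5).
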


This result can be extended with a Bernstein bound (see appendix). The proof is based on two quantities gotten from the triangle inequality. The first quantity is the difference between $U_h(\an, \bn)$ and its expectation $U_h(\alpha, \beta)$. $U_h(\an, \bn)$ is a two-sample U-statistic and we can prove a bound between itself and its expectation in probability \cite{Hoeffding1963}. The second quantity is the difference between $U_h(\an, \bn)$ and the expectation of $\widetilde{U}_h^k(\an, \bn)$. We use the difference between the two quantities to obtain a new random variable quantity. From this new random variable, we use the Hoeffing inequality to obtain a dependence with respect to $k$.

 This deviation bound shows that if we increase the number of data $n$ and batches $k$ while keeping the minibatch size $m$ fixed, we get closer to the expectation. We will investigate the dependence on $k$ and $m$ in different scenarios in the numerical experiments. Remarkably, the bound does not depend on the dimension of $\mathcal{X}$, which is an appealing property when optimizing in high dimension.  

As discussed before, an interesting output of Minibatch Wasserstein is the minibatch OT matrix $\Pi_m$. Since it is hard to compute in practice, we investigate the error on the marginal constraint of $\Pi_k$.
In what follows, we denote by $\Pi_{(i)}$ the $i$-th row of matrix $\Pi$ and by 
$ \mathbf{1} \in \R^n $ the vector whose entries are all equal to $1$.
\begin{theorem}[Distance to marginals]\label{thm:dist_marg} Let $ \delta \in (0,1) $, and consider two distributions $\an, \bn$. For all $ k \geqslant 1 $, all $ 1 \leqslant i \leqslant n  $, with probability at least $1-\delta$ on the draw of $\an,\bn$ and $D_k$ we have:
\begin{equation}
\vert  \Pi_k(\an, \bn)_{(i)} \mathbf{1} - \frac{1}{n} \vert \leqslant  \sqrt{\frac{2 \log(2/\delta)}{k}}.
\end{equation}
\end{theorem}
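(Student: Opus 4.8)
The plan is to reduce the row sum of $\Pi_k$ to a simple empirical average of bounded i.i.d.\ indicators and then apply a Hoeffding bound. The one structural fact the whole argument rests on is that for any pair $(A,B)$, the matrix $\Pi_{A,B}$ is an optimal transport plan (for $h\in\{W,W_\varepsilon\}$, the only variants carrying a genuine plan) between the \emph{uniform} measures on $A$ and on $B$; hence its first marginal, i.e.\ its vector of row sums, is exactly the uniform source measure supported on $A$. Concretely, the $i$-th row sum satisfies $(\Pi_{A,B})_{(i)}\mathbf{1}=\tfrac{1}{m}\mathds{1}[X_i\in A]$, independently of the positions of the samples and of $B$. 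I would first establish this identity, noting that the marginal constraints are enforced exactly for both plain and entropically regularized OT.

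Summing over the $k$ sampled pairs, I would then write
\begin{equation*}
\Pi_k(\an,\bn)_{(i)}\mathbf{1}=\frac{1}{k}\sum_{(A,B)\in D_k}(\Pi_{A,B})_{(i)}\mathbf{1}=\frac{1}{k}\sum_{\ell=1}^{k} Z_\ell,
\end{equation*}
where $Z_\ell:=\tfrac{1}{m}\mathds{1}[X_i\in A_\ell]$ and $(A_\ell,B_\ell)$ is the $\ell$-th pair drawn uniformly from $\Gamma$. Treating the draws as i.i.d., the $Z_\ell$ are i.i.d.\ and take values in $[0,1/m]$. Since $A_\ell$ is a uniform $m$-subset of $n$ points, $\mathbb{P}(X_i\in A_\ell)=\binom{n-1}{m-1}/\binom{n}{m}=m/n$, whence $\mathbb{E}[Z_\ell]=\tfrac{1}{m}\cdot\tfrac{m}{n}=\tfrac{1}{n}$. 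A useful observation here is that the law of $Z_\ell$ depends only on which indices are selected and not at all on the realised values of $\an,\bn$, so the bound will in fact hold conditionally on any draw of the data, which is stronger than the stated unconditional version.

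Finally I would apply Hoeffding's inequality to this average of bounded i.i.d.\ variables of range $1/m$ and mean $1/n$, obtaining
\begin{equation*}
\mathbb{P}\Big(\big\lvert \Pi_k(\an,\bn)_{(i)}\mathbf{1}-\tfrac{1}{n}\big\rvert\geq t\Big)\leq 2\exp\big(-2km^2 t^2\big).
\end{equation*}
Setting the right-hand side equal to $\delta$ and solving gives $t=\tfrac{1}{m}\sqrt{\log(2/\delta)/(2k)}$, which is at most $\sqrt{2\log(2/\delta)/k}$ for every $m\geq 1$; this already yields the claimed inequality (with room to spare), and a union bound over $i$ would extend it uniformly if desired.

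I do not expect a genuine obstacle: once the marginal identity $(\Pi_{A,B})_{(i)}\mathbf{1}=\tfrac1m\mathds{1}[X_i\in A]$ is in hand, the rest is a textbook concentration argument. The only mild subtleties are (i) verifying that the plan has exactly uniform marginals for each admissible OT variant, which holds because the regularization perturbs the objective but not the coupling constraints, and (ii) the i.i.d.\ treatment of $D_k$; if the pairs are instead sampled without replacement, one replaces Hoeffding by its sampling-without-replacement analogue (Hoeffding–Serfling), which only sharpens the constant and leaves the stated bound valid.
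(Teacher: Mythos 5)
Your proof is correct, and it follows the same overall template as the paper's proof --- write the $i$-th row sum of $\Pi_k$ as an average over the $k$ drawn pairs and apply Hoeffding conditionally on the data --- but it rests on a sharper structural identity that the paper does not exploit. The paper's argument takes the summands $\omega_p = \sum_{j}(\Pi_{A_p,B_p})_{i,j}$, bounds them crudely by $1$, identifies their common mean with $\Pi_m(\an,\bn)_{(i)}\mathbf{1} = \tfrac{1}{n}$ by invoking the separate proposition that $\Pi_m$ is an admissible transport plan, and then applies Hoeffding with range $1$, yielding $\sqrt{\log(2/\delta)/(2k)}$ (already within the stated bound). You instead observe the exact identity $(\Pi_{A,B})_{(i)}\mathbf{1} = \tfrac{1}{m}\mathbf{1}_{A}(i)$ --- valid for $W$ and $W_{\varepsilon}$ since regularization perturbs the objective but not the coupling constraints --- which turns the row sum into an average of i.i.d.\ rescaled Bernoulli variables of range $1/m$, whose mean $\tfrac{1}{m}\cdot\tfrac{m}{n}=\tfrac{1}{n}$ you recompute directly by the same counting argument ($\binom{n-1}{m-1}/\binom{n}{m}=m/n$) that underlies the paper's proposition, so you never need that proposition as a black box. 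This refinement buys you three things: a bound sharper by a factor of $2m$, namely $t=\tfrac{1}{m}\sqrt{\log(2/\delta)/(2k)}$, showing the theorem's constant is loose; validity conditional on any realization of $\an,\bn$, which is stronger than the stated unconditional claim; and an explicit account of which OT variants carry a genuine plan. Your closing remark on the sampling of $D_k$ (i.i.d.\ draws versus without replacement, with Hoeffding--Serfling as the fallback) addresses an ambiguity that the paper's own proof silently resolves by treating the draws as i.i.d., so there is no gap on that point either.
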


The proof uses the convergence of $\Pi_k$ to $\Pi_m$ and the fact that 
$\Pi_m$ is a transportation plan and respects the marginals.

\subsection{Gradient and optimization}

In this section we review the optimization properties of the minibatch OT losses to ensure the convergence of our loss functions with modern optimization frameworks. We study a standard parametric data fitting problem. Given some discrete samples \(\left(x_{i}\right)_{i=1}^{n} \subset \mathcal{X}\) from some unknown distribution \(\alpha\) , we want to fit a parametric model \(\lambda \mapsto \beta_{\lambda} \in \mathcal{M}(\mathcal{X})\) to \(\alpha\) using the mini-batch Wasserstein distance for a set $\Lambda$ in an Euclidian space.
\begin{equation}
\min _{\lambda \in \Lambda}\quad  U_{h} (\an, \beta_{\lambda})
\end{equation}
Such problems are written as semi discrete OT problems because one of the distributions is continuous while the other one is discrete. For instance, generative models fall under the scope of such problems \cite{genevay_2018} also known as minimal Wasserstein estimation.
As we have an expectation over one of the distributions, we would like to use a stochastic gradient descent strategy to minimize the problem. By using SGD for their method, \cite{genevay_2018} observed that it worked well in practice and they got meaningful results with minibatches. However it is well known that the empiricial Wasserstein distance is a biased estimator of the Wasserstein distance over the true distributions and leads to biased gradients as discussed in \cite{Bellemare_cramerGAN}, hence SGD might fail. The goal of this section is to prove that unlike the full Wasserstein distance, the minibatch strategy does not suffer from biased gradients. 

As stated in Proposition~\ref{prop:bias}, we enjoy an unbiased estimator. However, the original Wasserstein distance is not differentiable, hence we will, further on, only consider 
the entropic loss and the Sinkhorn divergence which are differentiable.

\begin{theorem}[Exchange of Gradient and expectation ] Let $\lambda \in V$, where $V$ is a nontrivial open set in $\R^p$. Let $\alpha$ and $\zeta$ be compactly supported distributions. Let $\XX \sim \alpha^{\otimes m}$ and $\ZZ \sim \zeta^{\otimes m}$ be two random variables in $\R^{m \times d}$. Assume $\psi_\lambda: \mathcal{Z} \mapsto \mathcal{Y}$ is differentiable with bounded gradients. Finally, suppose that the ground cost $C$ is $\mathcal{C}^1$. Then we have for the entropic loss and the Sinkhorn divergence:
\begin{align*}
&\nabla_{\lambda} \int_{\mathcal{X}^{\otimes m}}\int_{\mathcal{Z}^{\otimes m}}  h(\XX,\psi_\lambda(\ZZ)) d\alpha^{\otimes m}(\XX) d\zeta^{\otimes m}(\ZZ)\\
&\qquad  =  \int_{\mathcal{X}^{\otimes m}}\int_{\mathcal{Z}^{\otimes m}} \nabla_{\lambda} h(\XX,\psi_\lambda(\ZZ)) d\alpha^{\otimes m}(\XX) d\zeta^{\otimes m}(\ZZ)
\label{app:exchange_grad_exp}
\end{align*}
\end{theorem}

The proof relies on the differentiation lemma. Contrary to the full Wasserstein distance, we proved that the minibatch OT losses do not suffer from biased gradients and this justifies the use of SGD to optimize the problem.

\section{Experiments}
In this section, we illustrate the behavior of minibatch Wasserstein. We use it as a loss function for generative models, use it for gradient flow and color transfer experiments. For our experiments, we relied on the POT package \cite{flamary2017pot} to compute the exact OT solver or the entropic OT loss and the Geomloss package \cite{feydy19a} for the Sinkhorn divergence. The generative model and gradient flow experiments were designed in PyTorch \cite{paszke2017automatic} and all the code is released here \footnote{\url{https://github.com/kilianFatras/minibatch_Wasserstein}}.  

\subsection{Minibatch Wasserstein generative networks}
We illustrate the use of minibatch Wasserstein loss for generative modeling \cite{goodfellow2014}. The goal is to learn a generative model to generate data close to the target data. We draw 8000 points which follow 8 different gaussian modes (1000 points per mode) in 2D where the modes form a circle. After generating the data, we use a minibatch Wasserstein distance and minibatch Sinkhorn divergence as loss functions with a squared euclidian cost and compared them to WGAN \cite{arjovsky_2017} and its variant with gradient penalty WGAN-GP \cite{Gulrajani2017}. We give implementation details in supplementary.

\begin{figure}[t!]
    \centering
    \includegraphics[width=\columnwidth]{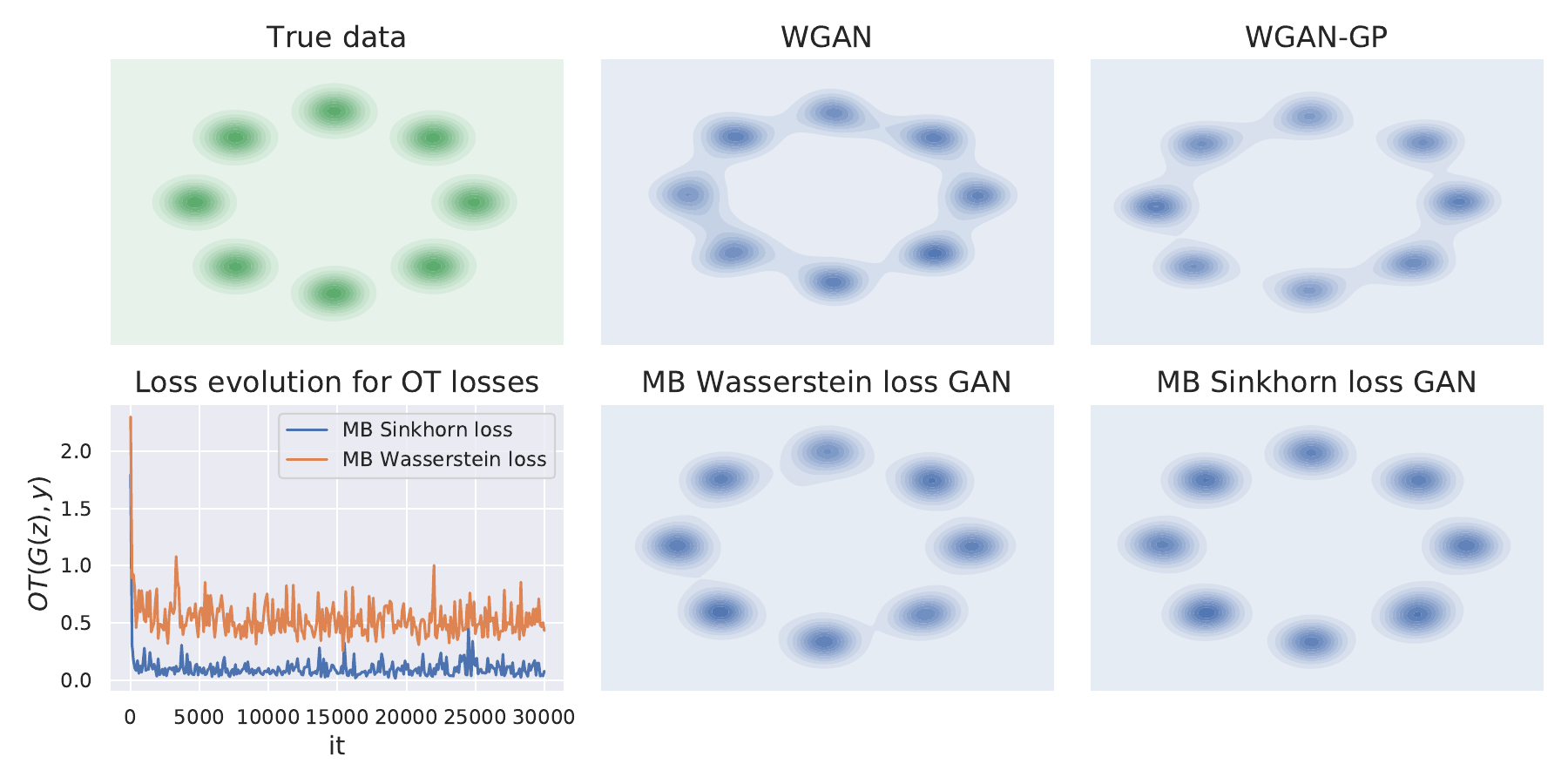}\vspace{-3mm}
    \caption{Generated data in 2D for gaussian modes for several generative models.}
    \label{fig:GAN_Wass}
\end{figure}

We show the estimated 2D distributions in Figure \ref{fig:GAN_Wass}.
For the same architecture it seems that MB Wasserstein trains better generators than WGAN and WGAN-GP. This could come from the fact that MB Wasserstein minimize a complex but well posed objective function (with the squared euclidian cost) while WGAN still need to solve the minmax problem making convergence more difficult especially on this 2D problem.


\begin{figure*}
    \centering
    \includegraphics[scale=0.3]{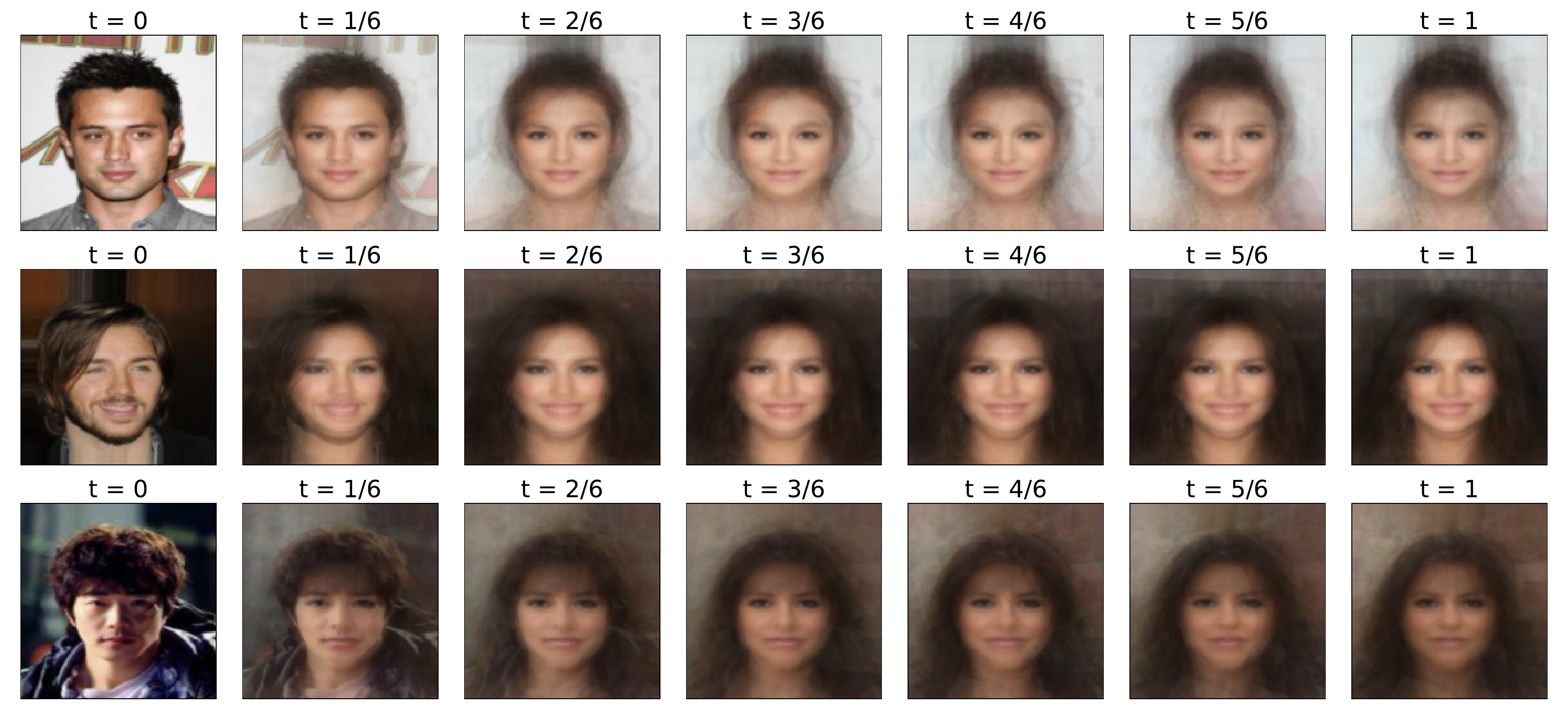}  \vspace{-5mm}
    \caption{Gradient flow on the CelebA dataset. Source data are 5000 male images while target data are 5000 female images. The batch size $m$ is set to 500 and the number of minibatch $k$ is set to 10. The results were computed with the minibatch Wasserstein distance.}
    \label{fig:GF_celeb}
\end{figure*}

\subsection{Minibatch Wasserstein gradient flow}

For a given target distribution $\alpha$, the purpose of gradient flows is to model a distribution $\beta(t)$ which at each iteration follows the gradient direction to minimize the loss $\beta_t \mapsto h(\alpha, \beta_t)$  \cite{Peyre2015, liutkus19a}. 
The gradient flow simulate the non parametric setting of data fitting problem. In this setting, the modeled distribution $\beta$ is parametrized by a vector $\lambda$ which is the vector position $\xx$ that encodes its support.

We follow the same procedure as in \cite{feydy19a}. The original gradient flow algorithm uses an Euler scheme. Formally, starting from an initial distribution at time $t=0$, it means that at each iteration we integrate the ODE
$$
\dot{\boldsymbol{x}}(t)=- \nabla_{\boldsymbol{x}} F \left(\boldsymbol{x}(t)\right).
$$
In our case, we cannot compute the gradient directly from our minibatch OT losses. As the OT loss inputs are distributions, we have an inherent bias when we calculate the gradient from the weights $\frac{1}{m}$ of samples. To correct this bias, we multiply the gradient by the inverse weight $m$. Finally, for each data $\xx$ we integrate:
\begin{equation}
\dot{\boldsymbol{x}}(t)=-m \nabla_{\boldsymbol{x}}\left[\widetilde{U}_h^k(\an, \bn)\right]\left(\boldsymbol{x}(t)\right)
\end{equation}
We recall that the inherent bias from minibatch makes that the final solution can not be the target distribution.

The considered data are from the CelebA dataset \cite{liu2015}. We use 5000 male images as source data and 5000 female images as target data. We show the evolution of 3 samples in the source data in Figure \ref{fig:GF_celeb}. We use a squared euclidean cost, a batch size of 500, a learning rate of 0.05 and make 750 iterations. $k$ did not need to be large and was set to 10 in order to stabilize the gradient flow.
We see a natural evolution in the images along the gradient flow similar to results obtained in \cite{liutkus19a}. Interestingly the gradient flow with MB Wasserstein in Figure \ref{fig:GF_celeb} leads to possibly more detailed backgrounds than with MB Sinkhorn (provided in supplementary) probably due to the two layers of regularization in the latter.

\subsection{Large scale barycentric mapping for color transfer}
\begin{figure*}[!h]
    \centering
    \includegraphics[scale=0.35]{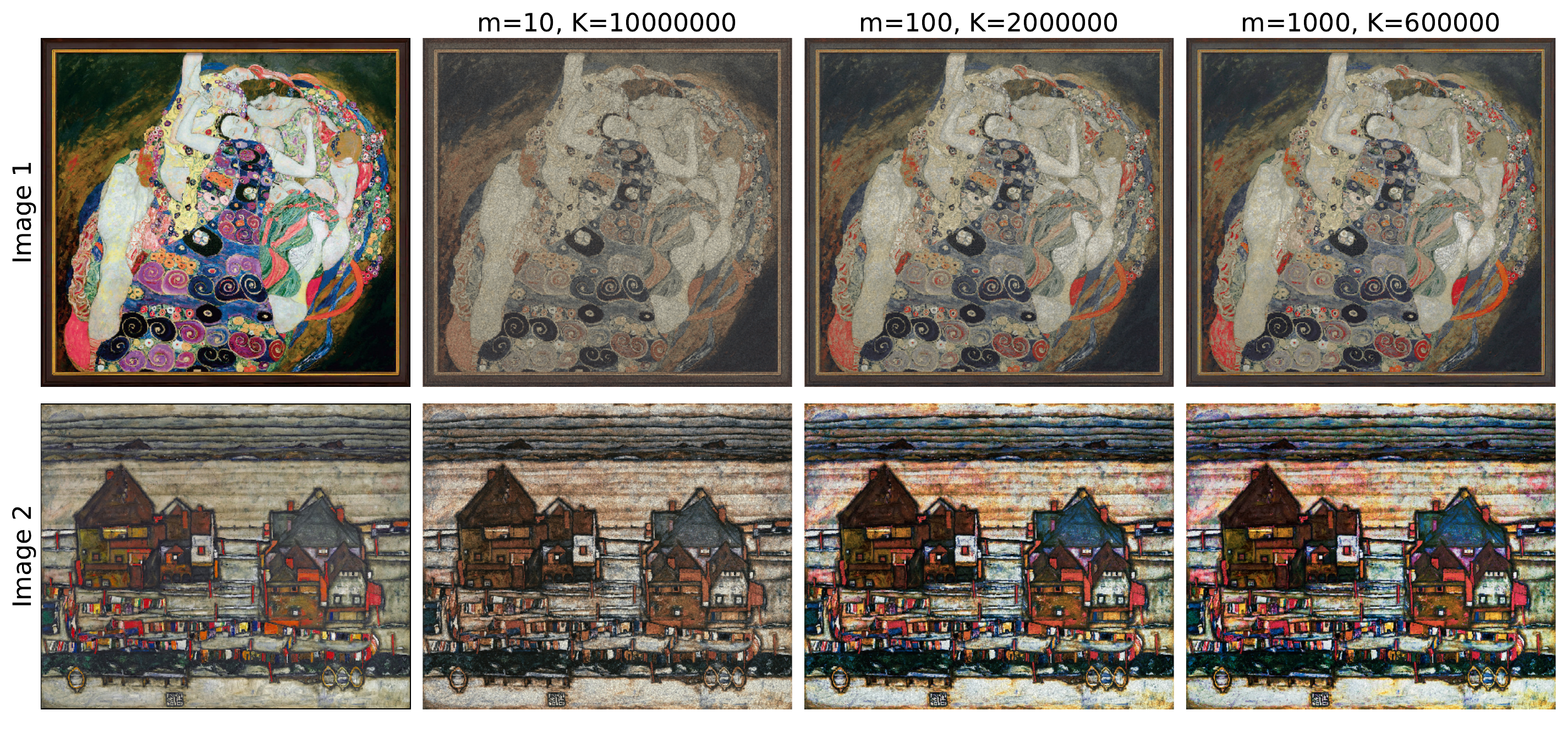}\vspace{-5mm}
    \caption{Color transfert between full images for different batch size and number of batches. (Top) color transfert from image 1 to image 2. (Bottom) color transfert from image 2 to image 1.}
    \label{fig:CT_full_img}
\end{figure*}

The purpose of color transfer is to transform the color of a source image so that it follows the color of a target image. Optimal Transport is a well known method to solve this problem and has been studied before in \cite{Ferradans2013, blondel2018}. Images are represented by point clouds in the RGB color space identified with [0, 1]. Then by calculating the transportation plan between the two point clouds, we get a transfer color mapping by using a barycentric projection. As the number of pixels might be huge, previous work selected a subset of pixels using k-means clusters for each point cloud. This strategy allows to make the problem memory tractable but looses some information. With MB optimal transport, we can compute a barycentric mapping for all pixels in the image by incrementally updating the mapping at each minibtach. 
When one selects a source batch A and a target batch B, she just needs to update the transformed vector between the considered batches as $Y_s\big\rvert_A = \sum_{B \in \mathcal{P}_m(\beta_n)}  \Pi_{  A, B } X_t\big\rvert_B$. Indeed, to perform the color transfer when we have the full $\Pi_k$ matrix, we compute the matrix product:
\begin{equation}
    Y_s = n_s \Pi_k(\an, \bn) X_t
\end{equation}
that can be computed incrementally by considering restriction to batches (the full algorithm is given in appendix). To the best of our knowledge, it is the first time that a barycentric mapping algorithm has been scaled up to 1M pixel images. About the required memory for experiments, the memory cost to store data is $O(n)$. The minibatch OT calculus requires $O(m^2)$ because we need to store the ground cost and the OT plan. The marginal experiment requires $O(n)$, as we just need to average the marginals of the plan. Finally, the memory cost is $O(n)$ while OT is $O(n^2)$.

The source image has (943000, 3) RGB dimension and the target image has RGB dimension (933314, 3). For this experiments, we used the minibatch Wasserstein distance with squared euclidean ground cost for several m and k.
We used batch of size 10, 100 and 1000. We selected $k$ so as to obtain  a good visual quality and observed that a smaller $k$ was needed when using large minibatches. Further experiments which show the dependence on $k$ can be found in appendix. Also note that performing MB optimal transport can be done in parallel and can be greatly speed-up on multi-CPU architectures.
 One can see in \ref{fig:CT_full_img} the color transfer (in both directions) provided with our method. We can see that the diversity of colors falls when the batch size is too small as the entropic solver would do for a large regularization parameter. However, even for 1M pixels, a batch size of 1000 is enough to keep a good diversity of colors.

 We also studied empirically the results of theorem \ref{thm:dist_marg}, as shown in Figure \ref{fig:marg_time} we recover the $O(k^{-1/2})$ convergence rate on the marginal with a constant depending on the batch size $m$.  Furthermore, we also empirically studied the computational time and showed that our method is not affected by the number of points with a fixed complexity when an algorithm like Sinkhorn still has a $O(n^2)$ complexity. These experiments show that the minibatch Wasserstein losses are well suited for large scale problems where both memory and computational time are issues.

\begin{figure}[h!]
    \centering
    \includegraphics[scale=0.4]{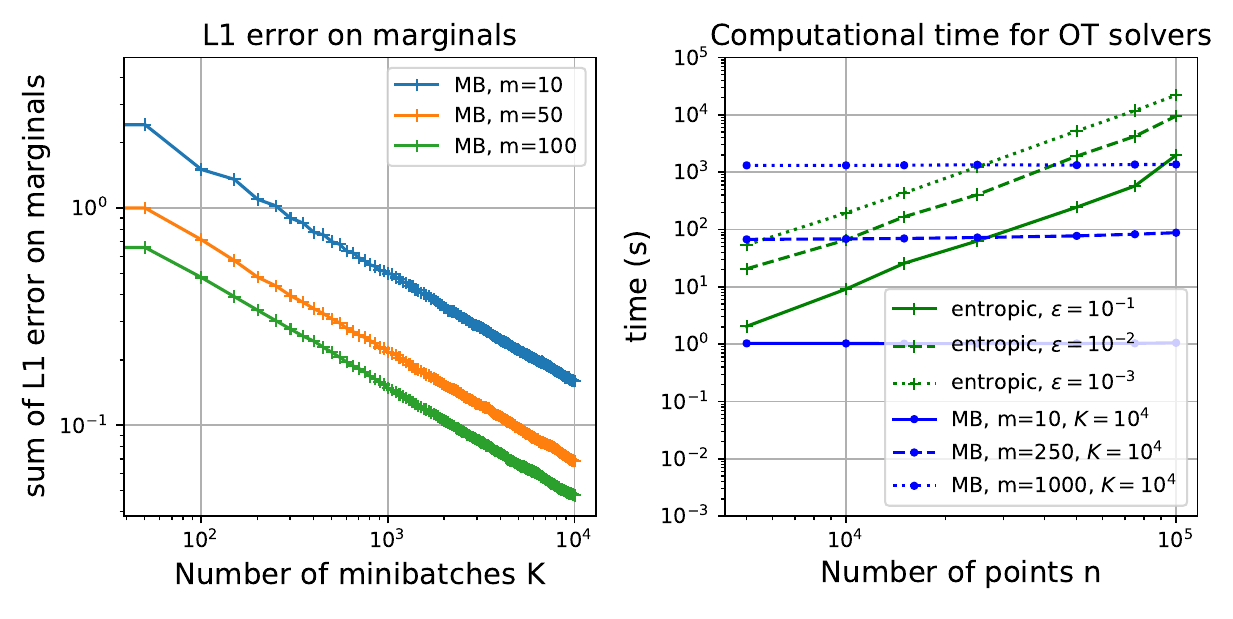}
    \caption{(left) L1 error on both marginals (loglog scale). We selected 1000 points from original images and computed the error on marginals for several m and k (loglog scale). (Right) Computation time for several OT solvers for several number of points in the input distributions, the computation time of the cost matrix is included.}
    \label{fig:marg_time}
\end{figure}




\section{Conclusion}

In this paper, we studied the impact of using a minibatch strategy in order to reduce the Wasserstein distance complexity. We review the basic properties, and studied the asymptotic behavior of our estimator. We showed a deviation bound between our subsampled estimator and the expectation of our estimator.
Furthermore, we studied the optimization procedure of our estimator and proved that it enjoys unbiased gradients. Finally, we demonstrated the effect of minibatch strategy with gradient flow experiments, color transfer and GAN experiments. Future works will focus on the geometry of minibatch Wasserstein (for instance on barycenters) and on investigating a debiasing approach similar to the one used for Sinkhorn Divergence.

\subsubsection*{Acknowledgements}

Authors would like to thank Thibault Séjourné and Jean Feydy for fruitful discussions. This work is partially funded through the projects OATMIL ANR-17-CE23-0012 and 3IA Côte d'Azur Investments ANR-19-P3IA-0002 of the French National Research Agency (ANR).

{\small
\nocite{*}
\bibliographystyle{apalike}
\bibliography{egbib}
}

\appendix
\onecolumn

{\centering{\LARGE\bfseries Learning with minibatch Wasserstein  : asymptotic and gradient properties}

\vspace{1em}
\centering{{\LARGE\bfseries Supplementary material}}

}

\paragraph{Outline.} The supplementary material of this paper is organized as follows:
\begin{itemize}
    \item In section A, we first review the formalism with definitions, basic property proofs, statistical proofs and optimization proofs. Then we give details about the 1D case.
    \item In section B, we give extra experiments for domain adaptation, minibatch Wasserstein gradient flow in 2D and on the celebA dataset and finally, color transfer.
\end{itemize}


\section{Formalism}
In what follows, without any loss of generality and in order to simplify the notations we will work with the cost matrix $C = C(X,Y) =(\V X_i - Y_i \V )_{1 \leqslant i,j \leqslant n} $.

\subsection{Definitions}
We start giving the formal definitions for the transportation plan $\Pi_m$. We recall that the discrete entropy of a coupling matrix is defined as
$
H(P) = - \sum_{i, j} P_{i, j}\left(\log \left(P_{i, j}\right)-1\right)
$ [chapitre 4, \cite{COT_Peyre}]. The entropic regularization parameter $\varepsilon \in \mathbb{R}_{+}$.
\begin{definition}[Mini-batch Transport] Let $  A \in \mathcal{P}_m(\alpha_n)  $ and $ B \in  \mathcal{P}_m(\beta_n)  $ be two sets. We denote by $ \Pi ^{0} _{ A, B }(\an,\bn) = (\Pi ^{0} _{ A, B }(i,j))_{1\leqslant i,j \leqslant m } \in \R^{m \times m} $ an optimizer of the optimal transport. Formally,
\begin{equation}
\Pi ^{0} _{ A, B }  = \underset{ \Pi \in U( A, B ) }{\operatorname{argmin} } \langle \Pi, C_{\V  A, B}   \rangle - \varepsilon H(\Pi)
\end{equation}
where $ C_{\V  A, B} \in \mathbb{R}^{m \times m} $ is the matrix extracted from $C$ by considering elements of the lines (resp. columns) of $C$ which belong to $ A$ (resp. $ B$) and $H$ the entropy term. $\varepsilon$ is a positive real number that can be equal to 0 to get the original OT problem.
\end{definition}
For two sets $  A \in \mathcal{P}_m(\alpha_n)  $ and $ B \in  \mathcal{P}_m(\beta_n)  $ we denote by $ \Pi_{  A, B }(\an,\bn) \in \mathbb{R}^{n \times n} $ the matrix
\begin{equation}
 \Pi_{  A, B } = ( \Pi ^{0} _{ A, B }(i,j) \mathbf{1}_{A}(i)  \mathbf{1}_{B}(j)   )_{(i,j) \in \alpha_n \times \beta_n }
\end{equation}
\begin{definition}[Averaged mini-batch transport]\label{def:AVG_OT} We define the empirical \textit{averaged mini-batch transport matrix} $  \Pi _m(\an,\bn)$ by the formula
\begin{equation}
 \Pi_m :=  \frac{1}{ \dbinom{n}{m}^2  } \sum_{ \substack{   A \in \mathcal{P}_m(\alpha_n)  } } \sum_{  \substack{  B \in \mathcal{P}_m(\beta_n) } }  \Pi_{  A, B }
\end{equation}
Moreover, we can define the averaged Wasserstein distance over all mini batches as :
  \begin{equation}
    U_W(\an,\bn) = \langle \Pi_m, C \rangle
  \end{equation}
\end{definition}
\begin{remark} Note that this construction is consistent with $U_h(\an,\bn)$.
\end{remark}

\subsection{Basic properties}

\begin{proposition}
 $\Pi_m$ is a transportation plan between the empirical distributions $\an, \bn$.
\end{proposition}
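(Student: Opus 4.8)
The plan is to verify the three defining conditions of a transportation plan between $\an$ and $\bn$: that $\Pi_m$ has non-negative entries, that its row sums equal the weights $1/n$ of $\an$, and that its column sums equal the weights $1/n$ of $\bn$. Non-negativity is immediate, since each $\Pi_{A,B}$ is a (zero-padded) optimal transport plan and hence has non-negative entries, and $\Pi_m$ is a convex combination of these matrices.

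The core of the argument is the marginal computation, and I would start with the rows. For a fixed pair $(A,B)$, the block $\Pi^0_{A,B}$ is a coupling of the uniform measures on $A$ and $B$, so each of its rows sums to $1/m$; after zero-padding into the $n \times n$ matrix $\Pi_{A,B}$, the $i$-th row therefore sums to $\frac{1}{m}\mathbf{1}_A(i)$, contributing $1/m$ exactly when $i \in A$ and $0$ otherwise. Summing over all pairs,
\begin{equation*}
(\Pi_m \mathbf{1})_i = \dbinom{n}{m}^{-2} \sum_{A \in \Pa} \sum_{B \in \Pb} \frac{1}{m}\mathbf{1}_A(i) = \dbinom{n}{m}^{-2} \frac{1}{m} \left(\sum_{A \in \Pa} \mathbf{1}_A(i)\right)\dbinom{n}{m},
\end{equation*}
where the inner sum over $B$ simply counts the $\dbinom{n}{m}$ subsets of size $m$. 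The remaining sum counts the batches $A$ that contain the fixed index $i$, which is $\dbinom{n-1}{m-1}$ (fix $i$ and choose the other $m-1$ elements among the remaining $n-1$).

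The computation then closes using the elementary identity $m\dbinom{n}{m} = n\dbinom{n-1}{m-1}$, giving $(\Pi_m \mathbf{1})_i = \frac{1}{m}\,\dbinom{n-1}{m-1}\big/\dbinom{n}{m} = \frac{1}{n}$, the desired weight of $\an$. The column marginals follow from the identical argument with the roles of $A$ and $B$ (rows and columns) interchanged, using that $\Pi^0_{A,B}$ also has all column sums equal to $1/m$. I do not expect a genuine obstacle here: the only things to get right are the bookkeeping of the zero-padding and the count of batches through a fixed index, after which the binomial identity does all the work. The one point worth stating explicitly is that each $\Pi^0_{A,B}$ is, by definition, a coupling with uniform marginals $1/m$ irrespective of whether $\varepsilon = 0$ or $\varepsilon > 0$, so the conclusion holds uniformly across the admissible OT losses.
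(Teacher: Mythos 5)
Your proof is correct and follows essentially the same route as the paper's: fix a row index, use that each zero-padded block $\Pi_{A,B}$ contributes $\frac{1}{m}\mathbf{1}_A(i)$ to that row, count the $\dbinom{n-1}{m-1}$ batches containing the index, and close with the binomial identity, with columns handled symmetrically. Your explicit treatment of non-negativity and of the $\varepsilon=0$ versus $\varepsilon>0$ cases are minor (correct) additions that the paper leaves implicit.
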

\begin{proof}
We need to verify that the marginals sum to one -e.g. that the sum over any row (resp. column) is equal to $\frac{1}{n}$. Without loss of generality, we will fix a source sample (or row): $i_0$. A simple combinatorial argument gives that $\sum_{  A \in \mathcal{P}_m(\alpha_n)  } \mathbf{1}_{A}(i_0)= \dbinom{n-1}{m-1}$. Now we are ready to sum over the row $i_0$.

  \begin{align}
  \sum_{j=1}^n \Pi_m(i_0,j) 
  &= \frac{1}{\dbinom{n}{m}\dbinom{n}{m}} \sum_{j=1}^n \sum_{ A \in \mathcal{P}_m(\alpha_n) } \sum_{ B \in \mathcal{P}_m(\beta_n)} \Pi_{A, B}(i_0,j) \\
    &= \frac{1}{\dbinom{n}{m}\dbinom{n}{m}} \sum_{ B \in \mathcal{P}_m(\beta_n) } \sum_{j=1}^n \Pi_{A, B}^{0}(i_0,j) \mathbf{1}_{B}(j) \sum_{ A \in \mathcal{P}_m(\alpha_n) } \mathbf{1}_{A}(i_0) \\
    &= \frac{1}{\dbinom{n}{m}\dbinom{n}{m}} \sum_{ B \in \mathcal{P}_m(\beta_n)} \underbrace{\sum_{j=1}^n \Pi_{A, B}^{0}(i_0,j) \mathbf{1}_{B}(j)}_{=1/m} \dbinom{n-1}{m-1} \\
    &= \frac{1}{\dbinom{n}{m}\dbinom{n}{m}} \dbinom{n}{m} \frac{1}{m} \dbinom{n - 1}{m - 1} \\
    &= \frac{1}{n}
  \end{align}
The argument is similar for the summation over any column.

\end{proof}

\begin{remark}[Positivity, symmetry and bias] Let $m<n$, the quantity $ U_h $ is positive and symmetric but also stricly positive, i.e $ U_h(\an, \an) >0 $. Indeed,
\begin{align}
U_h(\an, \an) & \defas \frac{1}{ \dbinom{n}{m}^2  } \sum_{ A \in \Pa} \sum_{ A' \in \Pa  } h(A, A') \\
&= \frac{1}{ \dbinom{n}{m}^2  } \sum_{ (A,A') \in \Pa \times \Pb, A \neq A'  } h(A, A') > 0
\end{align}
\end{remark}
{\bfseries Convexity}
We introduce a few notations. Let $\mathcal{D}(\R^d) $ be the space defined by
\begin{equation}
 \mathcal{D}(\R^d) := \{  \sum_{i=1}^p \gamma_i \delta_{x_i} : (\gamma_i)_{1 \leq i \leq p}  \in (\R_+)^p, \sum_{i=1}^p \gamma_i = 1;  p \in \mathbb{N}; (x_i)_{1 \leq i \leq p}  \in (\R^d)^p   \}
\end{equation}
It is easy to see that $\mathcal{D}(\R^d) $ is convex. One can actually extend in a natural way the definition of $U_h$ to the set $\mathcal{D}(\R^d) \times \mathcal{D}(\R^d)$. Assuming this can be done, the intuition for convexity is that $U_h$ is an average of convex terms [(section 9.1 and prop 4.6, \cite{COT_Peyre}]. We then claim the convexity of the following maps:
\begin{align*}
    (\alpha_n,\beta_n) \quad & \mapsto U_W(\alpha_n, \beta_n) \\
    \mathcal{D}(\R^d) \times \mathcal{D}(\R^d) & \to \quad \R
\end{align*}
and for $ h = W_{\epsilon} $ or $h = S_{\epsilon}$:
\begin{align*}
    \alpha_n \quad &  \mapsto U_h( \alpha_n,\beta_n) \\
    \mathcal{D}(\R^d) & \to \quad \R \\
    \beta_n & \mapsto U_h(\alpha_n,\beta_n) \\
  \mathcal{D}(\R^d)  & \to \quad \R
\end{align*}
\subsection{Statistical proofs}
Note that because the distributions $ \a$ and $ \beta $ are compactly supported, there exists a constant $M >0$ such that for any $1 \leqslant i,j \leqslant n$, $ \V X_i - Y_j \V \leqslant M $ with $M := \text{diam}(\text{Supp}(\alpha) \cup \text{Supp}(\beta))$.
{ We define the following quantity depending on the OT loss $h$:
  \begin{equation}
    M_h =
    \begin{cases*}
     \text{diam}(\text{Supp}(\alpha) \cup \text{Supp}(\beta))     & if $h = W$ \\
     \frac{3}{2}\left\{ \text{diam}(\text{Supp}(\alpha) \cup \text{Supp}(\beta))  + \varepsilon (2\log_2(m) + 1) \right \}      & if $ h = W_{\varepsilon} $ or $ S_{\varepsilon} $
    \end{cases*}
  \end{equation}

\begin{lemma}[Upper bounds] Let $(A,B) \in \Pa \times \Pb $. We have the following bound for each of the above considered OT losses $h$:
\begin{equation}
  \V   h( A, B ) \V  \leqslant 2 M_h
\end{equation}
\label{app:upper_bound}
\end{lemma}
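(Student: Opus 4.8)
The plan is to bound $h(A,B)$ separately for each of the three losses $h\in\{W,W_\varepsilon,S_\varepsilon\}$, using only two facts: every entry of the extracted cost matrix $C_{\V A,B}$ is at most $M:=\text{diam}(\text{Supp}(\alpha)\cup\text{Supp}(\beta))$, and both $A$ and $B$ carry uniform weights $1/m$, so the uniform coupling is admissible. The Wasserstein case is immediate: any $\Pi\in U(A,B)$ is a probability coupling, hence $\langle\Pi,C_{\V A,B}\rangle\le M\sum_{i,j}\Pi_{ij}=M$, and the integrand is nonnegative, so $0\le W(A,B)\le M=M_h$, well within $2M_h$. This case needs no further work.

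Next I would treat the entropic loss with the appendix convention $W_\varepsilon(A,B)=\min_{\Pi\in U(A,B)}\langle\Pi,C_{\V A,B}\rangle-\varepsilon H(\Pi)$, where $H(\Pi)=-\sum_{i,j}\Pi_{ij}(\log\Pi_{ij}-1)$. The key step is to control the entropy term: since $\Pi$ is a distribution over the $m^2$ cells of an $m\times m$ matrix, its Shannon entropy $-\sum\Pi_{ij}\log\Pi_{ij}$ lies in $[0,2\log_2 m]$, the maximum being attained at the uniform coupling, so $H(\Pi)\in[1,2\log_2 m+1]$ and $-\varepsilon H(\Pi)\in[-\varepsilon(2\log_2 m+1),-\varepsilon]$. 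Combining this with $\langle\Pi,C_{\V A,B}\rangle\in[0,M]$ term by term gives the lower bound $W_\varepsilon(A,B)\ge-\varepsilon(2\log_2 m+1)$, while evaluating the objective at the uniform coupling $\Pi_0=\frac{1}{m^2}\mathbf 1\mathbf 1^\top$ (feasible because its marginals are exactly the uniform $A,B$ marginals) gives $W_\varepsilon(A,B)\le M$. Hence $\V W_\varepsilon(A,B)\V\le M+\varepsilon(2\log_2 m+1)=\tfrac23 M_h\le 2M_h$.

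Finally, for the Sinkhorn divergence I would write $S_\varepsilon(A,B)=W_\varepsilon(A,B)-\tfrac12 W_\varepsilon(A,A)-\tfrac12 W_\varepsilon(B,B)$ and observe that the self-cost matrices $C_{\V A,A}$ and $C_{\V B,B}$ also have all entries bounded by $M$, so the bound of the previous paragraph applies verbatim to each of the three entropic terms. A triangle inequality then yields $\V S_\varepsilon(A,B)\V\le(1+\tfrac12+\tfrac12)\bigl(M+\varepsilon(2\log_2 m+1)\bigr)=2\bigl(M+\varepsilon(2\log_2 m+1)\bigr)=\tfrac43 M_h\le 2M_h$. The factor $\tfrac32$ built into $M_h$ for the regularized losses is precisely what lets the three $W_\varepsilon$ contributions of $S_\varepsilon$ fit inside $2M_h$.

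The only delicate point, and the place I expect to be the main obstacle, is the entropy estimate together with the bookkeeping of signs and conventions: one must use the appendix's $-\varepsilon H$ regularization rather than the nonnegative relative entropy of the main text (the two differ by the constant $\varepsilon(2\log_2 m+1)$, which is exactly where the $\log m$ term in $M_h$ originates), and one must verify that the uniform coupling is admissible so that it can serve to produce the upper bound. Everything else is a direct consequence of $\|C_{\V A,B}\|_\infty\le M$ and the fact that every coupling has total mass one.
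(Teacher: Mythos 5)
Your proof is correct and follows essentially the same route as the paper's: bound the transport term by $\|C_{\V A,B}\|_\infty \leq M$ using that couplings have unit mass, bound the entropy of a coupling on $m^2$ cells by $\log_2(m^2)+1$, and treat $S_\varepsilon$ as a sum of three $W_\varepsilon$-type terms. The only difference is that you are more explicit than the paper where it is terse --- you give a genuine two-sided bound on $W_\varepsilon$ (lower bound uniform over couplings, upper bound via the feasible uniform coupling, where the paper simply bounds the objective at the optimizer) and you track the $1+\tfrac12+\tfrac12$ coefficients in the Sinkhorn case, which the paper dismisses with ``one can conclude.''
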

\begin{proof} We start with the case $ h =W$. Note that with our choice of cost matrix $C=(C_{i,})$ one has $ 0  \leqslant C_{i,j}  \leqslant M_W$. We have for a transport plan $\Pi = (\Pi_{i,j})$ between $A$ and $B$ (with respect to the cost matrix $ C_{|A,B} $)
\[ \V  \langle \Pi,C_{|A,B} \rangle \V \leqslant \sum_{1 \leqslant i,j \leqslant m}  (C_{|A,B})_{ij} \Pi_{i,j} \leqslant M_W  \sum_{1 \leqslant i,j \leqslant m} \Pi_{i,j}  = M_W\]
Hence, $ h(A,B) \leqslant M_W $. \\
If $ h =W_{\varepsilon} $ for an $ \varepsilon>0$. Let us denote by $ E(q) = - \sum_{i=1}^{r} q_i \log(q_i) $ the Shannon entropy of the discrete probability distribution $q =(q_i)_{1 \leqslant i \leqslant r}$. Using the classical fact : $ 0 \leqslant E(q) \leqslant \log_2(r)  $ one estimates for a transport plan $\Pi$:
\[\V \langle \Pi,C_{|A,B} \rangle - \varepsilon H(\Pi) \V \leqslant M_W + \varepsilon (E(\Pi) + 1) \leqslant M_W + \varepsilon (\log_2(m^2) + 1) \leqslant 2 M_h  \]
which gives the intended bound by definition of $ W_{\varepsilon} $.
Lastly, for $h = S_{\varepsilon}$, since it is basically the sum of three terms of the form $ W_{\varepsilon} $ one can conclude.
\end{proof}
}

{\bfseries Proof of Theorem 1} We now give the details of the proof of theorem 1. We start by recalling the definitions of our losses.

\begin{definition}[Minibatch Wasserstein definitions] Given an OT loss $h$ and an integer $m \leq n$, we define the following quantities:

The continuous loss:
  \begin{equation}
    U_h(\alpha, \beta) :=  \mathbb{E}_{(X,Y) \sim  \alpha^{\otimes m } \otimes \beta ^{\otimes m}} [h(X,Y)]
  \end{equation}
  \label{def:expectationMinibatch}
The semi-discrete loss:
\begin{equation}
  U_h(\an,\beta)  := \dbinom{n}{m}^{-1} \sum_{A \in \Pa} \mathbb{E}_{Y \sim \beta ^{\otimes m}} [h(A,Y)]
\label{def:semi_discrete}
\end{equation}
The discrete-discrete loss:
\begin{equation}
U_h(\an,\bn) :=  \dbinom{n}{m}^{-2} \sum_{A \in \Pa} \sum_{B \in \Pb} h(A,B)
\label{def:discrete}
\end{equation}

The subsample discrete-discrete loss. Pick an integer $ k > 0$. We define:
  \begin{equation}
    \widetilde{U}_h^k(\an, \bn) :=k^{-1} \sum_{  (A, B)  \in D_k  } h(A, B)
  \end{equation}
where $D_k$ is a set of cardinality $k$ whose elements are drawn at random from the uniform distribution on $ \Gamma:= \mathcal{P}_m( \{X_1, \cdots, X_n  \}) \times \mathcal{P}_m( \{Y_1, \cdots, Y_n \} )  $. Where $h$ can be the Wasserstein distance $W$, the entropic loss $W_{\varepsilon}$ or the sinkhorn divergence $S_{\varepsilon}$ for a cost $c(\xx,\yy)$.
\end{definition}

\begin{lemma}[U-statistics concentration bound]\label{thm:U_to_mean}
Let $\delta \in (0,1) $ and $m$ be fixed, we have a concentration bound between $U_h(\an, \bn)$ and the expectation over minibatches $U_h(\alpha, \beta)$ depending on the number of empirical data $n$ which follow $\alpha$ and $\beta$.
\begin{equation}
\vert U_h(\an, \bn) - U_h(\alpha , \beta) \vert \leq M_h \sqrt{\frac{ \log(2/\delta)}{2\lfloor n/m \rfloor}}
\end{equation}
with probability at least $1-\delta$. Furthermore, a Bernstein concentration bound is available. Let us denote the variance of the OT loss h over the batches $\sigma_h^2$, \textit{i.e.,} $\sigma_h^2 = Var(h(X_1, \cdots, X_m, Y_1, \cdots, Y_m))$. The variance is bounded by $M_{h}^2$. Then we have with probability at least $\varepsilon$:

\begin{equation}
    P(\vert U_h(\an, \bn) - U_h(\alpha , \beta) \vert \geq \varepsilon) \leqslant 2 \text{ exp } \left(\frac{-\lfloor n/m \rfloor \varepsilon^2}{2 (\sigma_h^2 + \frac{M_h}{3} \varepsilon)} \right) \leqslant 2 \text{ exp } \left(\frac{-\lfloor n/m \rfloor \varepsilon^2}{2 (M_h^2 + \frac{M_h}{3} \varepsilon)} \right)
\end{equation}
\end{lemma}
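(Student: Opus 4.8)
The plan is to recognise $U_h(\an,\bn)$ as a two-sample U-statistic with bounded, symmetric kernel $h$, and to reduce its concentration to that of an ordinary average of independent bounded variables, following the classical argument of \cite{Hoeffding1963}. Two ingredients are already in place: by Proposition~\ref{prop:bias} the estimator is unbiased, so $\mathbb{E}[U_h(\an,\bn)] = U_h(\alpha,\beta)$, where the expectation is over the i.i.d.\ draw of $\an,\bn$; and by Lemma~\ref{app:upper_bound} the kernel is bounded, its range being controlled by $M_h$. The only real work is to turn the boundedness of $h$ into a dimension-free tail bound for the full double average.

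First I would write down Hoeffding's representation of the U-statistic. Permuting the $X$-indices by $\sigma$ and the $Y$-indices by $\tau$, and grouping the permuted samples into $\lfloor n/m\rfloor$ consecutive, disjoint blocks of size $m$, one has the identity
\begin{equation}
U_h(\an,\bn) = \frac{1}{(n!)^2}\sum_{\sigma,\tau} W(\sigma,\tau), \qquad W(\sigma,\tau) := \frac{1}{\lfloor n/m\rfloor}\sum_{l=1}^{\lfloor n/m\rfloor} h\bigl(A_l^\sigma, B_l^\tau\bigr),
\end{equation}
where $A_l^\sigma \in \Pa$ and $B_l^\tau \in \Pb$ denote the $l$-th blocks. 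For fixed $\sigma,\tau$ the blocks use pairwise disjoint samples, so the summands defining $W(\sigma,\tau)$ are i.i.d.\ bounded variables, each with mean $U_h(\alpha,\beta)$; thus $W(\sigma,\tau)$ is an average of $\lfloor n/m\rfloor$ independent terms. Since $U_h$ is itself an average of the $W(\sigma,\tau)$, Jensen's inequality applied to $s\mapsto e^{s(\cdot)}$ gives $\mathbb{E}\,e^{s(U_h-\mathbb{E}U_h)} \le \frac{1}{(n!)^2}\sum_{\sigma,\tau}\mathbb{E}\,e^{s(W(\sigma,\tau)-\mathbb{E}W)}$, so any moment-generating-function bound valid for a single block average transfers verbatim to $U_h$.

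It then suffices to plug the scalar Hoeffding bound for $W(\sigma,\tau)$ into this inequality, apply the Chernoff method and optimise over $s$, and finally invert the resulting tail to obtain the advertised high-probability statement with $\lfloor n/m\rfloor$ in the denominator. For the Bernstein refinement I would repeat the argument with Bernstein's MGF estimate in place of Hoeffding's, which introduces the per-batch variance $\sigma_h^2 = \mathrm{Var}(h(X_1,\dots,X_m,Y_1,\dots,Y_m))$; the crude bound $\sigma_h^2 \le M_h^2$ follows from Lemma~\ref{app:upper_bound}. The main obstacle is purely the two-sample bookkeeping: verifying that the double permutation average reproduces the uniform average over all pairs $(A,B)\in\Pa\times\Pb$, and that the diagonal blocks are genuinely independent with the correct marginal law. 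Once the Hoeffding decomposition and the Jensen transfer of the MGF are justified, the remaining estimates are the standard scalar concentration inequalities.
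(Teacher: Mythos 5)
Your proposal is correct and takes essentially the same route as the paper: the paper's proof simply invokes the two-sample U-statistic blocking argument of \cite[Section 5]{Hoeffding1963}, which is exactly the permutation/block decomposition, Jensen transfer of the moment generating function, and Chernoff inversion (with Bernstein's MGF estimate for the variance-dependent version) that you spell out explicitly. In other words, you have reconstructed the proof that the paper delegates to its citation, including the $\sigma_h^2 \leq M_h^2$ bound via the boundedness of $h$.
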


\begin{proof}
$U_h(\an, \bn)$ is a two-sample U-statistic and $U_h(\alpha , \beta)$ is its expectation as $\an$ and $\bn$ have \emph{iid} random variables. $U_h(\an, \bn)$ is a sum of dependant variables and Hoeffding found a way to rewrite $U_h(\an, \bn)$ as a sum of independent random variables. As our data are \emph{iid} and our OT loss is bounded, we can apply its third theorem to our U-statistic. The proof can be found in \cite[Section 5]{Hoeffding1963} (the two sample U-statistic case is discussed in 5.b) .
\end{proof}

\begin{lemma}[Deviation bound]\label{app:inc_U_to_U} Let $\an$ and $\bn$ be empirical distributions of respectively $\alpha$ and $\beta$, let $ \delta \in (0,1) $ and $ k \geqslant 1 $. We have a deviation bound between $\widetilde{U}_h^k(\an, \bn)$ and $U_h(\an, \bn )$ depending on the number of batches $k$.

\begin{equation}
\vert  \widetilde{U}_h^k(\an, \bn) - U_h(\an, \bn ) \vert \leqslant M_h \sqrt{\frac{2 \log(2/\delta)}{k}}
\end{equation}
with probability at least $1 - \delta$.
\end{lemma}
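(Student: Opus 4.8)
**The plan is to apply a Hoeffding concentration bound to the subsampling randomness, treating the batch pairs in $D_k$ as iid samples.**

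The key observation is that, conditionally on the draw of the empirical distributions $\an$ and $\bn$, the quantity $\widetilde{U}_h^k(\an, \bn)$ is an empirical average of $k$ iid random variables. Specifically, each pair $(A,B) \in D_k$ is drawn uniformly and independently from the finite set $\Gamma = \Pa \times \Pb$, and we evaluate $h(A,B)$ on it. First I would set, for $1 \leqslant \ell \leqslant k$, the random variable $Z_\ell := h(A_\ell, B_\ell)$ where $(A_\ell, B_\ell)$ is the $\ell$-th element of $D_k$. These $Z_\ell$ are iid (conditionally on $\an,\bn$), and
\begin{equation*}
\widetilde{U}_h^k(\an, \bn) = \frac{1}{k}\sum_{\ell=1}^k Z_\ell.
\end{equation*}

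Next I would identify the conditional expectation of each $Z_\ell$. Since the draw is uniform over $\Gamma$, which has cardinality $\binom{n}{m}^2$, we have
\begin{equation*}
\expect[Z_\ell \mid \an,\bn] = \binom{n}{m}^{-2} \sum_{A \in \Pa} \sum_{B \in \Pb} h(A,B) = U_h(\an, \bn),
\end{equation*}
exactly matching Definition~\eqref{def:discrete}. So $U_h(\an,\bn)$ is precisely the mean we are concentrating around. Then, by Lemma~\ref{app:upper_bound}, each $Z_\ell$ lies in the interval $[-2M_h, 2M_h]$ (in fact in $[0, 2M_h]$ for the losses considered), so the range of each variable is bounded by $2M_h$ at most; more carefully, since $0 \leqslant h(A,B) \leqslant 2M_h$, the range is $2M_h$.

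Finally I would invoke Hoeffding's inequality for bounded iid variables: for a sum of $k$ iid variables each taking values in an interval of length $R$, the deviation of the empirical mean from its expectation exceeds $t$ with probability at most $2\exp(-2kt^2/R^2)$. Taking $R = 2M_h$ and setting the right-hand side equal to $\delta$, we solve $2\exp(-2kt^2/(2M_h)^2) = \delta$ to obtain $t = M_h\sqrt{2\log(2/\delta)/k}$, which yields the claimed bound with probability at least $1-\delta$. I expect no serious obstacle here since the structure is a textbook application of Hoeffding once the conditional-iid reformulation is in place; the only point requiring care is being explicit that the bound holds conditionally on $\an,\bn$ and hence also holds unconditionally (marginally over the data draw) by taking expectations over the conditioning, which is what lets this lemma combine cleanly with Lemma~\ref{thm:U_to_mean} via a triangle inequality in the proof of Theorem~\ref{thm:inc_U_to_mean}.
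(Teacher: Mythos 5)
Your proof is correct and follows essentially the same route as the paper: conditionally on $\an,\bn$ the summands $h(A_\ell,B_\ell)$ are iid with conditional mean $U_h(\an,\bn)$ and bounded via Lemma~\ref{app:upper_bound}, Hoeffding's inequality gives the conditional bound, and de-conditioning yields the unconditional statement. The paper phrases the same argument through centered variables $\omega_l$ built from selection indicators (the incomplete U-statistic viewpoint), but this is identical in substance to your $Z_\ell - U_h(\an,\bn)$, and both give the same constant $M_h\sqrt{2\log(2/\delta)/k}$.
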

\begin{proof}
First note that $\widetilde{U}_h^k(\an, \bn)$ is an incomplete U-statistic of $U_h(\an, \bn )$. Let us consider the sequence of random variables $ ((\mathbf{1}_l(A, B)_{ (A, B) \in \Gamma})_{1 \leqslant l \leqslant k} $ such that $\mathbf{1}_l(A, B)  $ is equal to $1$ if $(A, B) $ has been selected at the $l-$th draw and $0$ otherwise. By construction of $ \widetilde{U}_h^k $, the aforementioned sequence is an i.i.d sequence of random vectors and the $ \mathbf{1}_l(A, B) $ are bernoulli random variables of parameter $ 1/ \vert \Gamma \vert $. We then have
\begin{equation}
 \widetilde{U}_h^k(\an, \bn) - U_h(\an, \bn ) = \frac{1}{k} \sum_{l=1}^k \omega_l
\end{equation}
where $ \omega_l = \sum_{(A, B) \in \Gamma} ( \mathbf{1}_l(A, B) - \frac{1}{ \vert \Gamma \vert }  ) {  h(A,B) } $. Conditioned upon $ X = (X_1, \cdots, X_n)$ and $Y = (Y_1, \cdots, Y_n) $, the variables $ \omega_l $ are independent, centered { and bounded by $2M_h$ thanks to lemma \ref{app:upper_bound}}. Using Hoeffding's inequality yields
\begin{align}
\mathbb{P}( \vert  \widetilde{U}_h^k(\an, \bn) - U_h(\an, \bn ) \vert > \varepsilon  ) & = \mathbb{E} [ \mathbb{P}( \vert  \widetilde{U}_h^k(\an, \bn) - U_h(\an, \bn ) \vert > \varepsilon \vert X,Y )] \\
& = \mathbb{E} [ \mathbb{P}( \vert \frac{1}{k} \sum_{l=1}^k \omega_l ) \vert > \varepsilon \vert X,Y )]\\
& \leqslant  \mathbb{E} [ 2 e^{\frac{-k \varepsilon^2}{2M^2}} ] = 2 e^{\frac{-k \varepsilon^2}{2M^2}}
\end{align}
which concludes the proof.
\end{proof}

\begin{theorem}[Maximal deviation bound]\label{app:inc_U_to_mean} Let $ \delta \in (0,1) $, $k \geqslant 1$ and $m$ be fixed, we have a maximal deviation bound between $\widetilde{U}_h^k(\an, \bn)$ and the expectation over minibatches $U_h(\alpha, \beta)$ depending on the number of empirical data $n$ which follow $\alpha$ and $\beta$ and the number of batches $k$.
  \begin{equation}
    \vert \widetilde{U}_h^k(\an, \bn) - U_h(\alpha, \beta) \vert \leq M_h \sqrt{\frac{ \log(2/\delta)}{2\lfloor n/m \rfloor}} + M_h \sqrt{\frac{2 \log(2/\delta)}{k}}
  \end{equation}
with probability at least 1 - $\delta$
\end{theorem}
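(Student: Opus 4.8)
The plan is to derive the bound directly from the two preceding lemmas, using the complete U-statistic $U_h(\an,\bn)$ as a pivot and splitting the deviation with the triangle inequality. The two terms that then appear correspond exactly to the two distinct sources of randomness at play: the draw of the minibatches $D_k$ and the draw of the empirical data $\an,\bn$.

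First I would write
\begin{equation*}
\vert \widetilde{U}_h^k(\an,\bn) - U_h(\alpha,\beta)\vert \leq \vert \widetilde{U}_h^k(\an,\bn) - U_h(\an,\bn)\vert + \vert U_h(\an,\bn) - U_h(\alpha,\beta)\vert .
\end{equation*}
The first summand measures how far the subsampled estimator lies from its conditional mean, the complete U-statistic $U_h(\an,\bn)$, and is governed by the draw of $D_k$; the second measures how far this U-statistic lies from its population expectation $U_h(\alpha,\beta)$ and is governed by the draw of $\an,\bn$. I would then bound the first summand by $M_h\sqrt{2\log(2/\delta)/k}$ using Lemma~\ref{app:inc_U_to_U}, and the second by $M_h\sqrt{\log(2/\delta)/(2\lfloor n/m\rfloor)}$ using Lemma~\ref{thm:U_to_mean}. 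Summing these two estimates reproduces the right-hand side of the claim, so no new analytic inequality is needed beyond assembling the lemmas.

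The remaining work is purely probabilistic: I must ensure the two favourable events hold simultaneously with the advertised confidence. Denoting by $A$ the event controlled by Lemma~\ref{thm:U_to_mean} and by $B$ the event controlled by Lemma~\ref{app:inc_U_to_U}, I would note that $A$ is measurable with respect to $(\an,\bn)$ alone, whereas the proof of Lemma~\ref{app:inc_U_to_U} bounds the failure probability of $B$ \emph{conditionally} on $X,Y$ by a quantity that does not depend on the data. Hence $\mathbb{P}(A\cap B)\geq 1-\mathbb{P}(A^c)-\mathbb{P}(B^c)$, and on $A\cap B$ the triangle inequality delivers the stated estimate.

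The step I expect to be the main obstacle is this last bookkeeping of confidence levels. A symmetric union bound forces each lemma to be invoked at level $1-\delta/2$, which turns $\log(2/\delta)$ into $\log(4/\delta)$ inside each square root; recovering the displayed constants exactly would require reading the statement as holding with probability $1-2\delta$, or absorbing the additive $\log 2$ into $M_h$. Since the concentration content is entirely carried by the two lemmas, this allocation of the error budget between the data randomness and the minibatch randomness is the only genuinely delicate point of the argument.
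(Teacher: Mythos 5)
Your proposal is correct and takes essentially the same route as the paper's own proof: the same triangle inequality through the pivot $U_h(\an,\bn)$, the same two lemmas applied to the two summands, and a union bound over the data randomness and the minibatch randomness (including the observation that the second lemma's failure probability is controlled conditionally on $X,Y$ by a data-independent quantity). The ``delicate point'' you flag is genuine and is in fact a slip in the paper's own proof, which claims confidence $1-(\frac{\delta}{2}+\frac{\delta}{2})$ while invoking each lemma with its $\log(2/\delta)$ bound; strictly one gets either confidence $1-2\delta$ with the displayed constants, or confidence $1-\delta$ with $\log(4/\delta)$ inside the square roots, so your accounting of the error budget is, if anything, more careful than the paper's.
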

\begin{proof} Thanks to lemma \ref{app:inc_U_to_U} and \ref{thm:U_to_mean} we get
\begin{align}
\vert \widetilde{U}_h^k(\an, \bn) - U_h(\alpha, \beta) \vert & \leq \vert \widetilde{U}_h^k(\an, \bn) - U_h(\an,\bn) \vert + \vert U_W(\an,\bn)- U_h(\alpha, \beta) \vert \\
& \leq M_h \sqrt{\frac{ \log(2/\delta)}{2\lfloor n/m \rfloor}} + M_h \sqrt{\frac{2 \log(2/\delta)}{k}}
\end{align}
with probability at least $ 1 - (\frac{\delta}{2} + \frac{\delta}{2}) = 1 - \delta $. We can get a sharper bound using the Bernstein inequality instead of the Hoeffding inequality as detailed in lemma \label{thm:U_to_mean}.
\end{proof}

{\bfseries Proof of Theorem 2} We now give the details of the proof of theorem 2.
In what follows, we denote by $\Pi_{(i)}$ the $i$-th row of matrix $\Pi$. Let us denote by $ \mathbf{1} \in \R^n $ the vector whose entries are all equal to $1$.
\begin{theorem}[Distance to marginals]\label{thm:dist_to_marg} Let $ \delta \in (0,1) $, we have for all $ k \geqslant 1 $ and all $ 1 \leqslant j \leqslant n  $:
\begin{equation}
\vert  \Pi_k(\an, \bn)_{(i)} \mathbf{1} - \frac{1}{n}  \vert \leqslant  \sqrt{\frac{2 \log(2/\delta)}{k}}
\end{equation}
with probability at least $1 - \delta$.
\end{theorem}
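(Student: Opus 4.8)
The plan is to recognize the row sum $\Pi_k(\an,\bn)_{(i)}\mathbf{1}$ as an empirical average of $k$ i.i.d.\ bounded random variables whose common mean is exactly $1/n$, and then to conclude by a Hoeffding bound, exactly in the spirit of Lemma~\ref{app:inc_U_to_U}. First I would unfold the definition $\Pi_k=k^{-1}\sum_{(A,B)\in D_k}\Pi_{A,B}$ and exchange the two finite sums to write
\begin{equation*}
\Pi_k(\an,\bn)_{(i)}\mathbf{1}=\frac{1}{k}\sum_{(A,B)\in D_k}\Big(\sum_{j=1}^n \Pi_{A,B}(i,j)\Big)=\frac{1}{k}\sum_{l=1}^k g_l,
\end{equation*}
where $g_l$ denotes the $i$-th row sum of the embedded plan $\Pi_{A_l,B_l}$ attached to the $l$-th draw $(A_l,B_l)\in\Gamma$ of Definition~\ref{def:sub_discrete}.

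The elementary observation driving everything is that each $\Pi_{A,B}$ is, by construction, an optimal coupling of the uniform measures on $A$ and on $B$, placed in an $n\times n$ matrix with zeros outside $A\times B$. Consequently its $i$-th row sums to $1/m$ when $i\in A$ and to $0$ otherwise, so that $g_l=\tfrac{1}{m}\mathbf{1}_{A_l}(i)\in[0,1/m]\subseteq[0,1]$. Since the pairs $(A_l,B_l)$ are drawn i.i.d.\ uniformly on $\Gamma$, the $g_l$ are i.i.d.\ and uniformly bounded. Their common expectation is identified using $\expect[\Pi_k]=\Pi_m$ (each uniform draw averages to $\Pi_m$) together with the earlier Proposition stating that $\Pi_m$ is an admissible transport plan between $\an$ and $\bn$: hence $\expect[g_l]=\Pi_m(\an,\bn)_{(i)}\mathbf{1}=1/n$. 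Equivalently one may compute this directly, since $\expect[\mathbf{1}_A(i)]=\dbinom{n-1}{m-1}\big/\dbinom{n}{m}=m/n$, giving $\expect[g_l]=\tfrac1m\cdot\tfrac{m}{n}=\tfrac1n$.

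With these pieces in hand the statement reduces to a concentration inequality for the empirical mean of $k$ i.i.d.\ variables in $[0,1]$ around their mean $1/n$. I would conclude by applying Hoeffding's inequality precisely as in Lemma~\ref{app:inc_U_to_U}: using $\vert g_l-\tfrac1n\vert\le 1$ one gets
\begin{equation*}
\mathbb{P}\Big(\big\vert \tfrac{1}{k}\sum_{l=1}^k g_l-\tfrac1n\big\vert>\varepsilon\Big)\le 2\,e^{-k\varepsilon^2/2},
\end{equation*}
and setting the right-hand side equal to $\delta$ and solving for $\varepsilon$ yields the claimed bound $\sqrt{2\log(2/\delta)/k}$, which holds uniformly over $1\le i\le n$.

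I expect the genuinely delicate points to be matters of bookkeeping rather than depth. The two things to pin down carefully are (i) the identity $g_l=\tfrac1m\mathbf{1}_{A_l}(i)$, a direct consequence of $\Pi^0_{A,B}$ being a coupling of two uniform $m$-point measures, and (ii) the verification that the i.i.d.\ and boundedness hypotheses required by Hoeffding hold under the sampling model of Definition~\ref{def:sub_discrete}. The conceptual heart is the computation $\expect[g_l]=1/n$: this is exactly where the marginal (transport-plan) property of $\Pi_m$ enters, and it is what ensures the empirical row sum concentrates around the correct target $1/n$.
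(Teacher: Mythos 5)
Your proposal is correct and follows essentially the same route as the paper's own proof: both write $\Pi_k(\an,\bn)_{(i)}\mathbf{1}$ as an empirical average of $k$ i.i.d.\ bounded random variables, identify its expectation with $\Pi_m(\an,\bn)_{(i)}\mathbf{1} = \frac{1}{n}$ using the fact that $\Pi_m$ is an admissible transport plan, and conclude with Hoeffding's inequality. Your explicit identification $g_l = \frac{1}{m}\mathbf{1}_{A_l}(i)$ and the direct check $\expect[g_l] = \frac{1}{m}\cdot\frac{m}{n} = \frac{1}{n}$ are a slightly cleaner piece of bookkeeping than the paper's indicator-sum formulation, but the argument is the same.
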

\begin{proof}
We would like to remind that $\Pi_m$ is a transportation plan between the full input distributions $\an$ and $\bn$ and hence, it verifies the marginals, i.e $\Pi_m(\an,\bn)_i \times \mathbf{1} = \frac{1}{n}$. Let us consider the sequence of random variables $ ((\mathbf{1}_p(A, B)_{ (A, B) \in \Gamma})_{1 \leqslant p \leqslant k} $ such that $\mathbf{1}_p(A, B)  $ is equal to $1$ if $(A, B) $ has been selected at the $p-$th draw and $0$ otherwise. By construction of $ \Pi_k(\an,\bn) $, the aforementioned sequence is an i.i.d sequence of random vectors and the $ \mathbf{1}_p(A, B) $ are bernoulli random variables of parameter $ 1/ \vert \Gamma \vert $. We then have
\begin{equation}
\Pi_k(\an,\bn)_{(i)} \mathbf{1}= \frac{1}{k} \sum_{p=1}^k \omega_p
\end{equation}
where $ \omega_p = \sum_{(A, B) \in \Gamma} \sum_{j=1}^n (\Pi_{A,B})_{i,j} \mathbf{1}_p(A, B) $. Conditioned upon $ X = (X_1, \cdots, X_n)$ and $Y = (Y_1, \cdots, Y_n) $, the random vectors $ \omega_p $ are independent, and bounded by $ 1 $. 
Moreover, one can observe that $ \mathbb{E}[ \Pi_k(\an,\bn)_i \mathbf{1} ] = \Pi_m(\an,\bn)_i \mathbf{1} $. Using Hoeffding's inequality yields
\begin{align}
\mathbb{P}( \V  \Pi_k(\an,\bn)_i \mathbf{1}  - \Pi_m(\an,\bn)_i \mathbf{1} )  \V > \varepsilon  ) & = \mathbb{E} [ \mathbb{P}(  \V \frac{1}{k} \sum_{p=1}^k \omega_p  - \mathbb{E}[ \frac{1}{k} \sum_{p=1}^k \omega_p]) \V  > \varepsilon  \vert X,Y )] \\
& \leqslant  2 e^{-2k \varepsilon^2}
\end{align}
which concludes the proof.
\end{proof}


\subsection{Optimization}

The main goal of this section is to give a justification of optimization for our minibatch OT losses by giving the \textbf{proof of theorem 3}. More precisely, we show that for the losses $ W_{\varepsilon} $ and $ S_{\varepsilon} $, one can exchange the gradient symbol $ \nabla $ and the expectation $  \mathbb{E} $. It shows for example that a stochastic gradient descent procedure is unbiased and as such legitimate.

\textbf{Main hypothesis.} We define a map $\psi_\lambda: \mathcal{Z} \mapsto \mathcal{Y}$, which is differentiable with bounded gradients. This can model the generator of GANs. We also suppose that the ground cost $C$ is $\mathcal{C}^1$. For a m-tuples of data $\ZZ$, $\psi_{\lambda}(\ZZ)$ denotes the tuple of data $\{\psi_{\lambda}(Z_1), \cdots, \psi_{\lambda}(Z_m)\}$. We introduce the energy map associated to the Wasserstein distance
\[ g: (C,\Pi) \mapsto \langle \Pi, C \rangle - \varepsilon H(\Pi).
\]\\
To prove this theorem, we rely on the "Differentiation Lemma". It allows to exchange expectations and gradients under some hypothesis.

\begin{lemma}[Differentiation lemma]\label{lemma:diff_lebesgue} Let $V$ be a nontrivial open set in $\R^p$ and let $\mathcal{P}$ be a probability distribution on $\R^d$. Define a map $\mathcal{F} : \R^d\times \R^d \times V \rightarrow \R$ with the following properties:
\begin{itemize}
    \item For any $\lambda \in V, \expect_{\mathcal{P}}[\vert \mathcal{F}(X, Y, \lambda)\vert ] < \infty$
    \item For $\mathcal{P}$-almost all $(X, Y) \in \R^d \times \R^d$, the map $V \rightarrow \R$, $\lambda \rightarrow \mathcal{F}(X, Y, \lambda)$ is differentiable.
    \item There exists a $\mathcal{P}$-integrable function $\varphi : \R^d \times \R^d \rightarrow \R$ such that $\vert \partial_{\lambda} \mathcal{F}(X, Y, \lambda) \vert \leq \varphi(X, Y)$ for all $\lambda \in V$.
\end{itemize}

Then, for any $\lambda \in V$ , $E_{\mathcal{P}}[\vert \partial_{\lambda} \mathcal{F}(X, Y, \lambda) \vert ] < \infty$ and the function $\lambda \rightarrow E_{\mathcal{P}}[\mathcal{F}(X, Y, \lambda)]$ is differentiable with differential:
\begin{equation}
  E_{\mathcal{P}} \partial_{\lambda} [\mathcal{F}(X, Y, \lambda)] = \partial_{\lambda} E_{\mathcal{P}} [\mathcal{F}(X, Y, \lambda)]
\end{equation}
\end{lemma}

The differentiation lemma requires the map $\lambda \mapsto h(\XX, \psi_{\lambda}(\ZZ))$ to be differentiable. Thus we need the optimal transport map $h$ to be differentiable \emph{w.r.t} the ground cost $C$. This is achieved by the following lemma:

\begin{lemma}[Danskin, Rockafellar] Let $g: (C,\Pi) \in \R^d \times \R^d \to \R $ be a continuous function. We define $\kappa: C \mapsto \max _{\Pi \in \UU} g(C,\Pi) $ where $\UU \subset \R^d$ is compact. We assume that for each $\Pi \in U$, the function $ g(\cdot, \Pi)  $ is differentiable and that $ \nabla_C g $ depends continuously on $ (C,\Pi)$. If in addition, $g(C,\Pi)$ is convex in $C$, and if $ \overline{C} $ is a point such that $ \operatorname{argmax}_{\Pi \in U}g(\overline{C},\Pi) = \{ \Pi^\star \} $, then $\kappa$ is differentiable at $\overline{C} $ and verifies
\begin{equation}
\nabla \kappa (\overline{C}) = \nabla_C g(\overline{C}, \Pi^\star)
\end{equation}
\label{thm:Danskin1}
\end{lemma}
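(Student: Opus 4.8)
The map $\varphi$ is a pointwise supremum over $w \in W$ of the functions $z \mapsto g(z,w)$, each convex in $z$, so $\varphi$ is itself convex and finite on $\R^d$. The plan is to show that at $\overline{z}$ the one-sided directional derivative of $\varphi$ exists in every direction $d$ and equals the linear form $d \mapsto \langle \nabla_z g(\overline{z}, \overline{w}), d \rangle$; convexity then upgrades this Gateaux differentiability into genuine (Fréchet) differentiability, yielding $\nabla \varphi(\overline{z}) = \nabla_z g(\overline{z}, \overline{w})$.

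Fix a direction $d$ and $t > 0$. For the lower bound, since $\overline{w}$ is feasible and $\varphi(\overline{z}) = g(\overline{z}, \overline{w})$,
\[
\varphi(\overline{z} + t d) \geq g(\overline{z} + t d, \overline{w}) \geq g(\overline{z}, \overline{w}) + t \langle \nabla_z g(\overline{z}, \overline{w}), d \rangle ,
\]
where the last inequality is the tangent inequality for the convex differentiable map $g(\cdot, \overline{w})$. Dividing by $t$ and letting $t \to 0^+$ gives $\liminf_{t \to 0^+} t^{-1}\big(\varphi(\overline{z} + td) - \varphi(\overline{z})\big) \geq \langle \nabla_z g(\overline{z}, \overline{w}), d \rangle$.

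The upper bound is the crux of the argument, and the main obstacle. Take any sequence $t_n \to 0^+$ and choose $w_n \in \operatorname{argmax}_{w \in W} g(\overline{z} + t_n d, w)$, which is nonempty because $W$ is compact and $g$ is continuous. Using $g(\overline{z}, \overline{w}) \geq g(\overline{z}, w_n)$ and the mean value theorem applied to $s \mapsto g(\overline{z} + s t_n d, w_n)$,
\[
\frac{\varphi(\overline{z} + t_n d) - \varphi(\overline{z})}{t_n} \leq \frac{g(\overline{z} + t_n d, w_n) - g(\overline{z}, w_n)}{t_n} = \langle \nabla_z g(\overline{z} + \theta_n t_n d, w_n), d \rangle
\]
for some $\theta_n \in (0,1)$. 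By compactness I pass to a subsequence along which $w_n \to w^\ast$; continuity of $g$ together with optimality forces $w^\ast \in \operatorname{argmax}_{w \in W} g(\overline{z}, w) = \{ \overline{w} \}$, so $w^\ast = \overline{w}$. The joint continuity of $\nabla_z g$ then gives $\langle \nabla_z g(\overline{z} + \theta_n t_n d, w_n), d \rangle \to \langle \nabla_z g(\overline{z}, \overline{w}), d \rangle$, whence $\limsup_{t \to 0^+} t^{-1}\big(\varphi(\overline{z} + td) - \varphi(\overline{z})\big) \leq \langle \nabla_z g(\overline{z}, \overline{w}), d \rangle$.

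Combining the two bounds, $\varphi'(\overline{z}; d)$ exists for every $d$ and equals $\langle \nabla_z g(\overline{z}, \overline{w}), d \rangle$, which is linear in $d$. A finite convex function on $\R^d$ whose directional derivatives at a point assemble into a linear functional is differentiable there, so $\varphi$ is differentiable at $\overline{z}$ with $\nabla \varphi(\overline{z}) = \nabla_z g(\overline{z}, \overline{w})$. The delicate point is precisely the passage to the limit in the upper bound: it relies on the singleton assumption $\operatorname{argmax}_{w} g(\overline{z}, w) = \{\overline{w}\}$, which is what forces every convergent sequence of near-maximizers to collapse onto $\overline{w}$. Without uniqueness one recovers only the general directional-derivative formula $\varphi'(\overline{z}; d) = \max_{w \in \operatorname{argmax}_{w} g(\overline{z}, \cdot)} \langle \nabla_z g(\overline{z}, w), d \rangle$, and differentiability may genuinely fail.
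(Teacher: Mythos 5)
Your proof is correct, but there is nothing in the paper to compare it against: the paper never proves this lemma. It is quoted as a classical result (Danskin's theorem, in Rockafellar's convex form) and used as a black box to conclude that the entropic loss is differentiable with respect to the cost matrix, the strong convexity of the entropic problem supplying the required uniqueness of the maximizer. What you have written is therefore a self-contained derivation of the cited result, and it follows the standard Danskin argument correctly: a lower bound on the difference quotient from the tangent inequality at the fixed maximizer $\overline{w}$; an upper bound via maximizers $w_n$ at the perturbed points together with the mean value theorem; compactness of $W$ plus uniqueness of the maximizer at $\overline{z}$ to force $w_n \to \overline{w}$; joint continuity of $\nabla_z g$ to pass to the limit; and finally the convex-analysis fact that a finite convex function whose directional derivatives at a point assemble into a linear functional is (Fr\'echet) differentiable there. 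One caveat: your choice of $w_n \in \operatorname{argmax}_{w \in W} g(\overline{z}+t_n d, w)$, and your identification of its limit, invoke continuity of $g$ in $w$, which is not literally among the stated hypotheses --- the statement only assumes differentiability in $z$ and joint continuity of $\nabla_z g$. This is a defect of the statement rather than of your argument (writing $\max$ already presupposes attainment), and it is easily repaired: continuity of $w \mapsto g(z_0,w)$ at a single $z_0$ suffices, since the identity $g(z,w)-g(z_0,w)=\int_0^1 \langle \nabla_z g(z_0+s(z-z_0),w),\, z-z_0\rangle\, ds$ then makes $g$ jointly continuous. Your closing remark --- that without uniqueness one only recovers the directional-derivative formula and differentiability may genuinely fail --- is accurate, and it is exactly the reason the paper restricts to the entropically regularized losses, whose optimal plan is unique, when it applies this lemma.
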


Hence we have :
\begin{corollary}\label{cor:diff_OT}
For all vectors $\XX, \ZZ \in \R^{dn}$ and integer $\varepsilon>0$, the maps $C \mapsto W_C^\varepsilon$ and $C \mapsto S_C^\varepsilon$ are differentiable \emph{w.r.t} $C$.
\end{corollary}

\begin{proof}
  Let us check that we verify the Danskin hypothesis theorem for entropic regularized Wasserstein distance. First, it is well known that the set of optimal transport plan is a compact set \cite{COT_Peyre}. We recall that the map $g$ is defined as 

  $$g: (C,\Pi) \mapsto \langle \Pi, C \rangle - \varepsilon H(\Pi).$$

  $\langle \Pi, C \rangle $ is bilinear, thus continuous in $(C,\Pi)$, and $H(\Pi)$ is continuous as the product of continuous function. Then $g$ is continuous in $C$ and $\Pi$ as the sum of continuous function. Moreover, the map $g$ is linear in $C$ for all $\Pi$, thus $g$ is $\mathcal{C}^1$ and convex in $C$ for all $\Pi$. Its differential is $\nabla_C g(C,\Pi) = \langle C, \Pi \rangle$ and is linear in both $C$ and $\Pi$. Finally as $\varepsilon > 0$, the map $g$ is strongly convex in $\Pi$ and then the entropic regularized Wasserstein distance has a unique solution $\Pi^\star$ \cite[section 4]{COT_Peyre}. Hence we can apply directly lemma 2 to get that the entropic regularized Wasserstein distance is differentiable for all cost $C$. The Sinkhorn divergence case is direct as it is the sum of three entropic regularized optimal transport terms.
\end{proof}

We are now ready to prove our theorem.

\begin{theorem}[Exchange gradient and expectation] Let $\lambda \in V$, where $V$ is a nontrivial open set in $\R^p$. Let $\alpha$ and $\zeta$ be compactly supported distributions. Let $\XX \sim \alpha^{\otimes m}$ and $\ZZ \sim \zeta^{\otimes m}$ be two random variables in $\R^{m \times d}$. Assume $\psi_\lambda: \mathcal{Z} \mapsto \mathcal{Y}$ is differentiable with bounded gradients. Finally, suppose that the ground cost $C$ is $\mathcal{C}^1$. Then we have for the entropic loss and the Sinkhorn divergence:

\begin{align*}
&\nabla_{\lambda} \int_{\mathcal{X}^{\otimes m}}\int_{\mathcal{Z}^{\otimes m}}  h(\XX,\psi_\lambda(\ZZ)) d\alpha^{\otimes m}(\XX) d\zeta^{\otimes m}(\ZZ)\\
&\qquad  =  \int_{\mathcal{X}^{\otimes m}}\int_{\mathcal{Z}^{\otimes m}} \nabla_{\lambda} h(\XX,\psi_\lambda(\ZZ)) d\alpha^{\otimes m}(\XX) d\zeta^{\otimes m}(\ZZ)
\label{app:exchange_grad_exp}
\end{align*}
\end{theorem}

\begin{proof}
Regarding the Sinkhorn divergence, as it is the sum of three terms of the form $ W_{\varepsilon} $, it suffices to show the theorem for $ h = W_{\varepsilon} $.

The first condition of the Differentiation Lemma is trivial as we have supposed that the random variables $\XX, \ZZ$ have compact supports. Indeed as $\psi_\lambda$ is $\mathcal{C}^1$ and $\zeta$ has compact support, $\mathcal{Y}$ is compact. Hence, the minibatch Wasserstein exists and is bounded on a finite set. We can build a measurable function $\phi$ which takes the biggest cost value $\|C\|_{\infty}$ inside $\mathcal{X}, \mathcal{Y}$ and 0 outside. As $\mathcal{X}, \mathcal{Y}$ are compact, the integral of the function over $\R^d$ is finite.\\

The second hypothesis is also direct by chain rule of differentiable function. Indeed by hypothesis, the map $\lambda \mapsto Y_\lambda$ is differentiable and entropic regularized OT is differentiable thanks to corollary \ref{cor:diff_OT}.

We now check the last hypothesis. Let us write the gradients of $\lambda \mapsto h(\XX, \psi_{\lambda}(\ZZ))$. By chain rule, the gradient of the cell $C_{j,k}$ $\lambda \mapsto C_{j,k}(\XX, \psi_{\lambda}(\ZZ))$ reads 
  \[
  \nabla_{\lambda} C_{j,k}(\XX, \psi_{\lambda}(\ZZ)) =  \nabla_Y C_{j,k}(\XX, \psi_{\lambda}(\ZZ)) \cdot \nabla_{\lambda} \psi_{\lambda}(\ZZ).
  \]
Where $\nabla_Y$ denotes the derivative of the second argument of the cost, \emph{i.e., the gradient of the map $Y \mapsto C(X,Y)$}. Thus, by chain rule of differentiable function, the gradient of $\lambda \mapsto h(\XX, \psi_{\lambda}(\ZZ)))$ reads  
\[
\nabla_{\lambda} h(\XX, \psi_{\lambda}(\ZZ)) = -\text{tr}(\Pi^\star \cdot D^{T})\cdot (\nabla_{\lambda} \psi_{\lambda}(\ZZ)),
\]
where $D_{j,k} = \nabla_{Y} C_{j,k}(\XX, \psi_{\lambda}(\ZZ))$. Hence we now need to dominate $\|\nabla_{\lambda} h(\XX, \psi_{\lambda}(\ZZ))\|.$ As we are in finite dimension, all norms are equivalent and we can consider $\|\cdot\|$ to be a submultiplicative norm. Hence we have :
\begin{align}
\|\nabla_{\lambda} h(\XX, \psi_{\lambda}(\ZZ))\| &\leq \|\Pi^\star \| \|\nabla_{Y} C_{j,k}(\XX, \psi_{\lambda}(\ZZ))\| \|\nabla_{\lambda} \psi_{\lambda}(\ZZ)\|\\
& \leq \frak{m} \|\nabla_{\lambda} \psi_{\lambda}(\ZZ)\|\\
&\text{because we have a $\mathcal{C}^1$ cost with compactly supported $\XX, \psi_{\lambda}(\ZZ)$}\nonumber\\
& \leq \frak{m}^\prime\\
&\text{because we suppose $\lambda \mapsto \psi_{\lambda}$ has bounded gradients}\nonumber
\end{align}
Thus, we can bound the gradient of the map $\lambda \mapsto h(\XX, \psi_{\lambda}(\ZZ))$ by a measurable function which is 0 outside the compacts $\mathcal{X}$ and $\mathcal{Y}$ and some positive constant $\frak{m}^\prime$ inside the compacts. 

Hence, we verify all hypothesis of lemma \ref{lemma:diff_lebesgue} which justifies our claim.
\end{proof}

\subsection{1D case}
We now give the full combinatorial calculus for the 1D case. We start by sorting all the data and give to each of them an index which reprensents their position after the sorting phase. Then we select and sort all the minibatches. $x_j$ can not be at a position superior to its index $j$ inside a batch. For a fixed $x_j$, a simple combinatorial arguments tells you that there are $C_{x_j}^i$ sets where $x_j$ is at the $i$-th position:
\begin{equation}
C_{i, x_j}^{m, n} = \dbinom{j-1}{i-1} \dbinom{n-j}{m-i}
\end{equation}

Suppose that $x_j$ is transported to a $y_k$ points in the target mini batch. Then, they both share the same positions $i$ in their respective minibatch. As there are several $i$ where $x_j$ is transported to $y_k$, we sum over all those possible positions. Hence our current transportation matrix coefficient $\Pi_{j,k}$ can be calculated as :
\begin{equation}
\Pi_{j,k} = \sum_{i=i_{\text{min}}}^{i_{\text{max}}} C_{i, x_j}^{m, n} C_{i, y_k}^{m, n}
\end{equation}

Where  $i_{\text{min}} = \text{max}(0, m-n+j, m-n+k)$ and $i_{\text{max}} = \text{min}(j, k)$. $i_{\text{min}}$ and $i_{\text{max}}$ represent the sorting constraints. Furthermore, as we have uniform weight histograms, we will transport a mass of $\frac{1}{m}$ and averaged it by the total number of transportation. So finally, our transportation matrix coefficient $\Pi_{j,k}$ are:

\begin{equation}
\Pi_{j,k} = \frac{1}{m \dbinom{n}{m}^2} \sum_{i=i_{\text{min}}}^{i_{\text{max}}} C_{i, x_j}^{m, n} C_{i, y_k}^{m, n}
\end{equation}

\newpage
\section{Extra experiments}
In this section, we present extra experiments on the utility of using minibatch Wasserstein loss for domain adaptation, gradient flow and color transfer. We also give the algorithm which computes the barycentric mapping incrementally.

\subsection{Generative models}

We give implementation details of our batch Wasserstein generative models. We use a normal Gaussian noise in a latent space of dimension 10 and the generator is designed as a simple multilayer perceptron with 2 hidden layers of respectively 128 and 32 units with ReLu activation functions, and one final layer with 2 output neurons. For the different OT losses, the generator is trained with the same learning rate equal to 0.05. The optimizer is the Adam optimizer with $\beta_1=0$ and $\beta_2=0.9$. For the Sinkhorn divergence we set $\varepsilon$ to 0.01. For WGAN and WGAN-GP we train a discriminator with the same hidden layers than the generator. We update the discriminator 5 times before one update of the generator. WGAN is trained with RMSprop optimizer and WGAN-GP with Adam optimizer ($\beta_1=0$, $\beta_2=0.9$) as done in their original papers. The learning rate is set to $10^{-4}$ for both. WGAN-GP has a gradient penalty parameter set to 10. All models are trained for 30000 iterations with a batch size of 100. Our minibatch OT losses use $k=1$, which means that we compute the stochastic gradient on only one minibatch, and larger $k$ was not needed to get meaningful results.

\subsection{Domain adaptation}
Domain adaptation problems consist to transfer knowledge from a source domain to a target domain. The goal is to use the labeled data in the source domain in order to classify the unlabeled data in the target domain. \cite{DACourty} used optimal transport to transport the source data to the target data by computing an OT map. Then they used a barycentric mapping to transport the source data to the target domain with their label. Optimal transport has been successful on this problem and we now want to study the impact of the minibatch OT losses and different OT variants.

We consider two common datasets for domain adaptation problems : MNIST \cite{MNIST} and USPS \cite{USPS}. The datasets are composed of hand written digits betwenn 0 and 9. MNIST have 60000 training samples and USPS have 7291 training samples. We select 7000 samples from each dataset. The used cost for those experiments is a normalized squared euclidean cost. We want to study the number of samples which are transported on same labeled data from the source dataset to the target dataset. That is why we will study the proportion of mass between same labeled data in the transportation matrix.

The experiments use minibatch Wasserstein loss. We will use several k and m values, while for the entropic OT loss we will consider values of epsilon between $10^{-3}$ and $1$. For each $m$ and $k$, we conducted the experiments 10 times and we plot the mean and standard deviation for each $m$ and $k$.

\begin{figure}[ht!]
    \centering
    \includegraphics[scale=0.5]{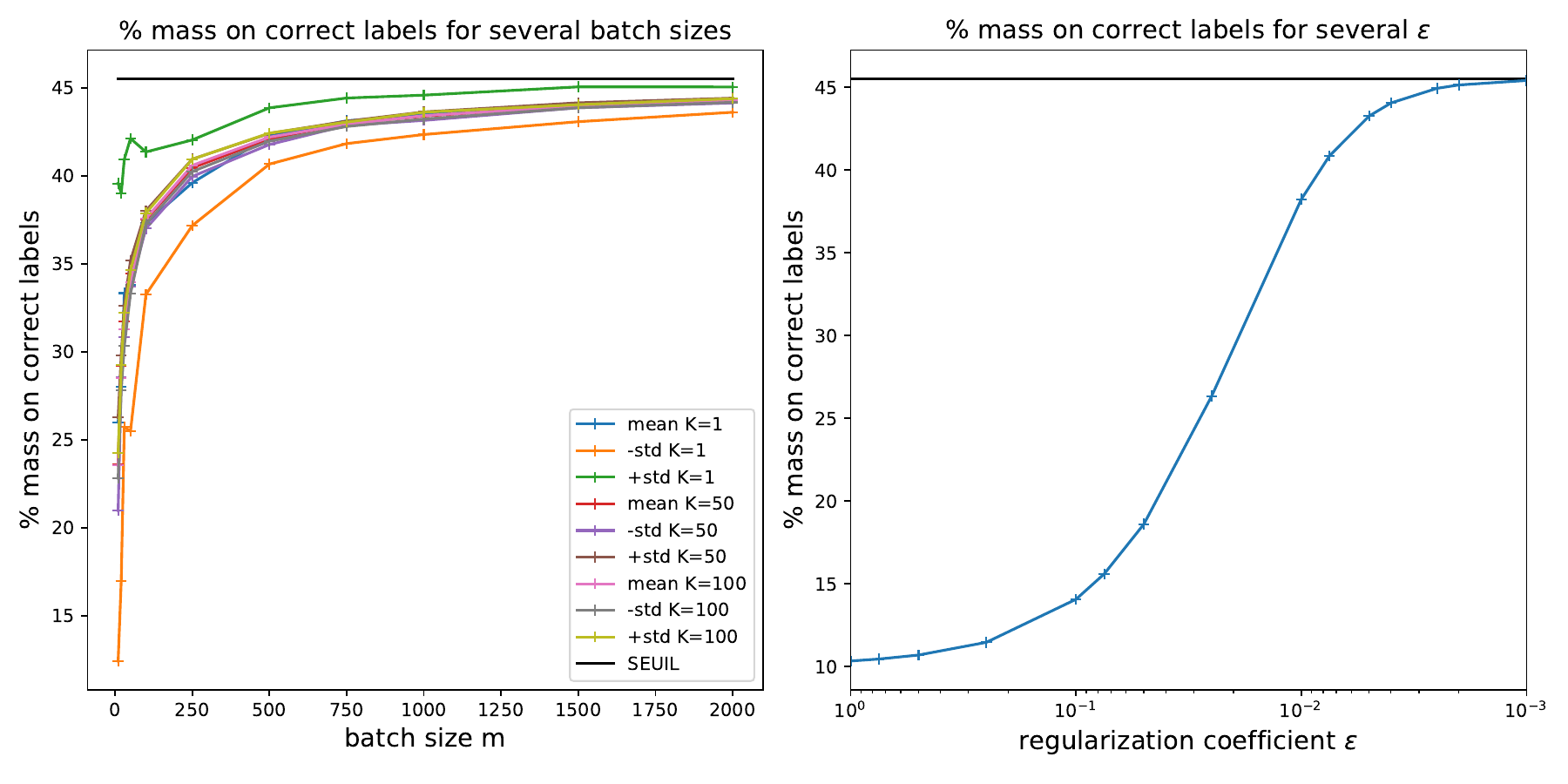}
    \caption{Proportion of correct transfered data between S/T domains for OT MB.}
    \label{fig:proportion_MB}
\end{figure}

This experiment shows that considering a very small batch size hurts the number of images transported on correct labels and taking a large number of batches does not correct the performance. We also see that the number of batches $k$ reduces the variance and should decrease when the batch size increases. Furthermore, we see that when $m$ decreases, we have a similar performance than for the entropic OT loss with a large regularization parameter $\varepsilon$. We conjecture, that doing the minibatch entropic loss with a large $\varepsilon$ parameter can lead to over regularization and can hurt the performance.

\subsection{Minibatch Wasserstein gradient flow}
We experimented the minibatch OT gradient flow to distributions in 2D. The purpose is to see the relevance of minibatch Wasserstein gradient flow for shape matching applications. We used the same experiments as in \cite{feydy19a} and relied on the geomloss package. In 2D we selected 500 data points following the image's pixel distribution. The experiments were conducted with the minibatch Wasserstein loss. We observe that we are not able to recover the target distribution, it is expected as our loss is strictly positive. However, for large enough batch size, the final distribution fits almost perfectly the target distribution and our loss leads to a good approximation.

Nevertheless we can see that taking a batch size too small results in a loss of information and drives the data toward the high density area as pointed in the 2D experiments. Regarding the number of minibatches $k$, it does not influence the shape of the final distribution.

\begin{figure}[!h]
    \centering
  \includegraphics[scale=0.27]{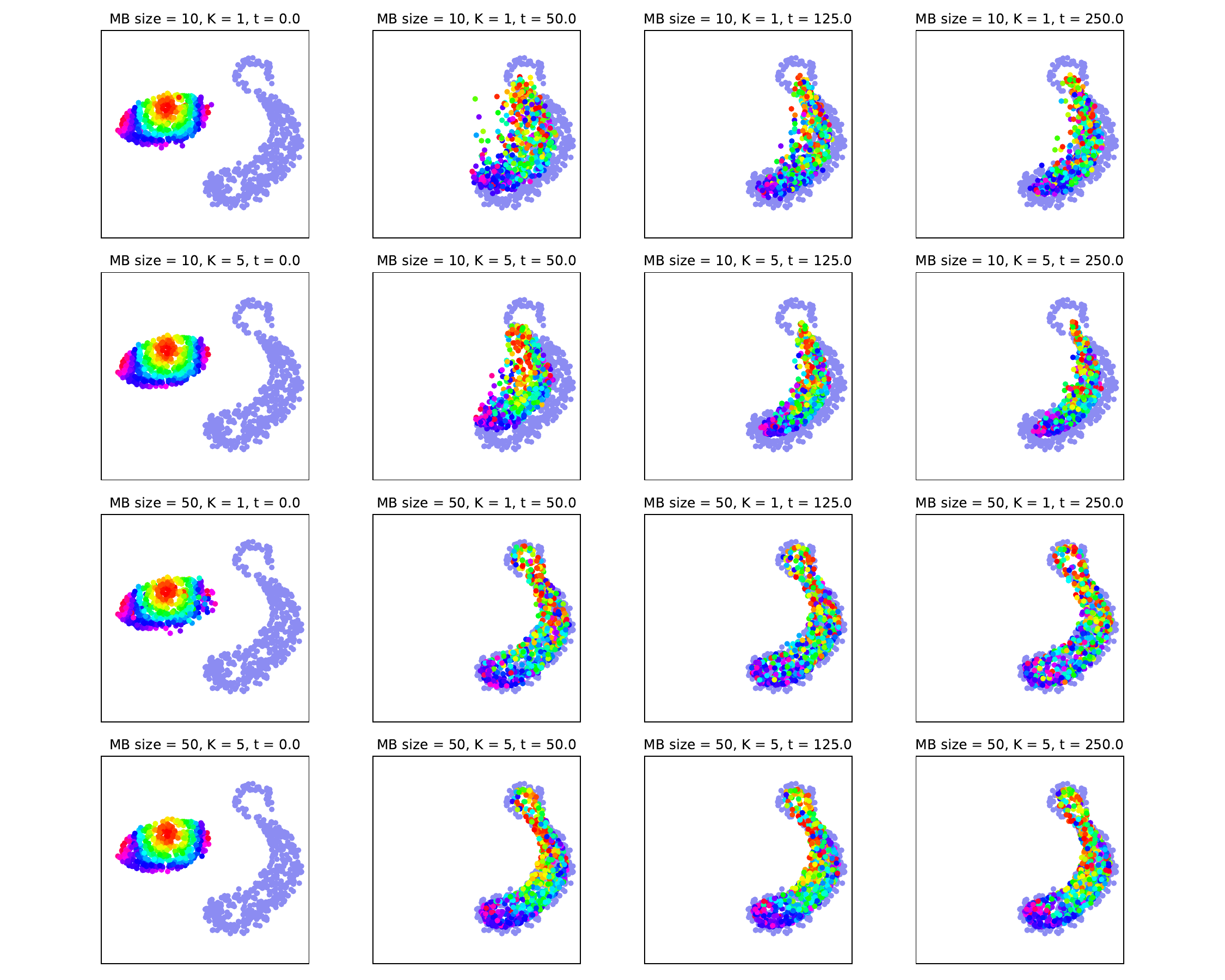}
    \caption{Gradient flow between 2D distributions for several batch sizes $m$ and several number of batches $k$. The source and the target distributions have 500 samples each.}
    \label{fig:GF_2D}
\end{figure}

Regarding the gradient flow on the celebA dataset, we now show the results when we use the minibatch Sinkhorn divergence instead of the minibatch Wasserstein distance. The minibatch Sinkhorn divergence is slower in practice than the minibatch Wasserstein distance and the samples converge toward different pictures. However, we can still see a natural evolution in the images along the gradient flow.
\begin{figure}[!h]
    \centering
    \includegraphics[scale=0.27]{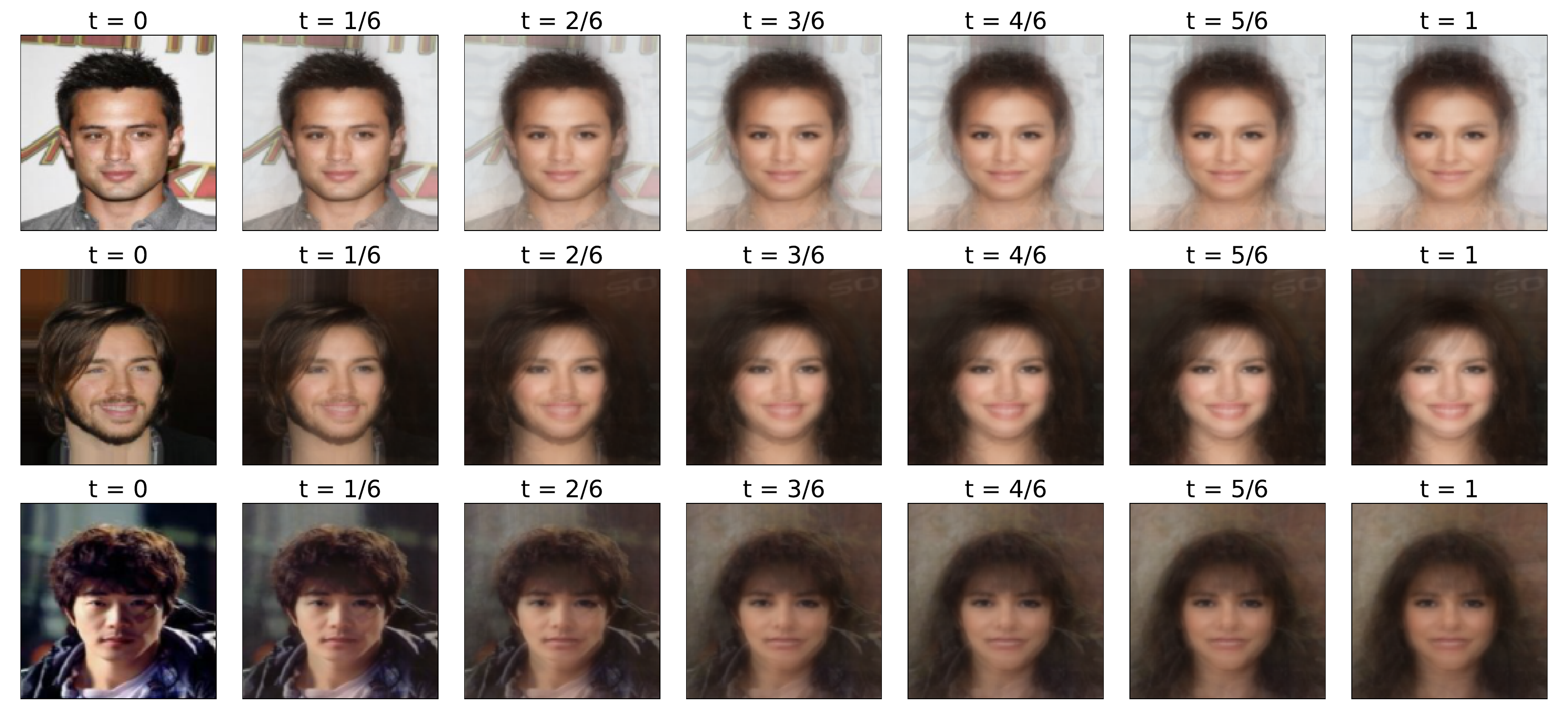}
    \caption{Gradient flow on the CelebA dataset. Source data are 5000 male images while target data are 5000 female images. The batch size $m$ is set to 500 and the number of minibatch $k$ is set to 10. The results were computed with the minibatch Sinkhorn divergence.}
    \label{fig:GF_celeb_sinkhorn}
\end{figure}

\subsection{Color transfer between subset of images}

In order to present the influence of $k$ for barycentric mapping, we present extra experiments for color transfer. We compute a k-means clustering with $l$ clusters for each point cloud. For each image, we computed 1000 k-means clusters of the point clouds and applied the optimal transport algorithms between those subsets. We consider batch size of 10, 50 and 100. We show the color transfer for each image for $k= 5000$ and $k=20000$ batches.

In what follows, we present the algorithm which computes the color transfer vectors incrementally without requiring the storage of the full cost matrix neither the full transportation matrix $\Pi_k$.

\begin{algorithm}[h]\label{alg:CT_incremental}
 \caption{Computation of incremental color transfer}
\SetAlgoLined
{\bfseries Inputs:} $m$, $k$, source domain $\boldsymbol{X_s} \in \mathbb{R}^{n\times d}$, target domain $\boldsymbol{X_t} \in \mathbb{R}^{n\times d}$ \;
{\bfseries Results} : $\boldsymbol{Y_s}$, $\boldsymbol{Y_t}$ \;
{\bfseries Initialisation} : $\boldsymbol{Y_s} \in \mathbb{R}^{n\times d}$, $\boldsymbol{Y_t} \in \mathbb{R}^{n\times d}$\;
\For{t=1, $\cdots$, k}{
Select a set $A$ of $m$ samples in $\boldsymbol{X_s}$\;
Select a set $B$ of $m$ samples in $\boldsymbol{X_t}$\;
Compute the restricted cost $C_{A, B}$\;
$G \leftarrow \underset{\Pi \in U(A, B)}{\text{argmin}} \langle C_{A, B}, \Pi \rangle $\;
$\boldsymbol{Y_s}\big\rvert_{A} \leftarrow  \boldsymbol{Y_s}\big\rvert_{A} + G . \boldsymbol{X_t}\big\rvert_{B}$\;
$\boldsymbol{Y_t}\big\rvert_{B} \leftarrow \boldsymbol{Y_t}\big\rvert_{B} + G^T . \boldsymbol{X_s}\big\rvert_{A}$\;
 }
return $\frac{n}{k} \boldsymbol{Y_s}$, $\frac{n}{k} \boldsymbol{Y_t}$
\end{algorithm}

\begin{figure}[!h]
    \centering
    \includegraphics[scale=0.38]{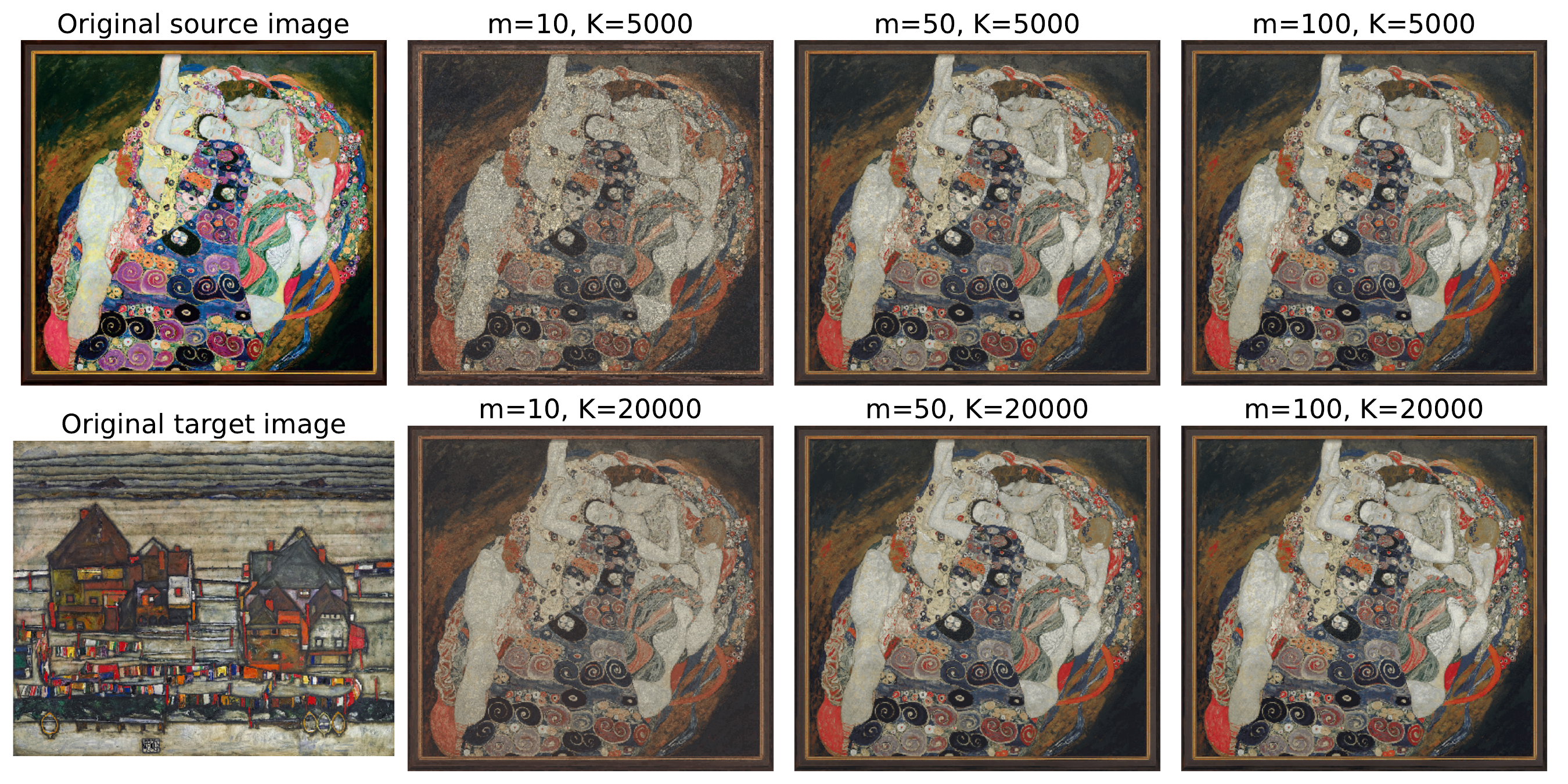}
    \caption{Color transfer from MB Wasserstein loss for several m and K. The minibatch Wasserstein distance is computed between subsets of original images.}
    \label{fig:CT_m_100}
\end{figure}

We see that for each batch size $m$, when the number of batches $k$ increases, we get better resolution for our images. It is expected as our matrix $\Pi_k$ gets closer to $\Pi_m$. However, when $m$ is small, we will need to have a large $k$ to get good resolutions for images. We can see this phenomenon for $m=10$, where $k=5000$ was not enough to have a good resolution. However, $k=5000$ was enough to get good resolutions for $m=1000$.

\end{document}